\DeclareMathOperator{\E}{\mathbb{E}}
\DeclareMathOperator*{\argmax}{arg\,max}
\newtheorem{theorem}{Theorem}
\newtheorem{definition}{Definition}
 \newtheorem{lemma}{Lemma}
\title{The Loss Surface of Residual Networks:\\ Ensembles \& the Role of Batch Normalization}
\author{Etai Littwin \& Lior Wolf\\
The School of Computer Science\\
Tel Aviv University, Israel\\
\texttt{\{etailittwin,liorwolf\}@gmail.com} \\
}
\begin{document}

\maketitle

\begin{abstract}
Deep Residual Networks present a premium in performance in comparison to conventional networks of the same depth and are trainable at extreme depths. It has recently been shown that Residual Networks behave like ensembles of relatively shallow networks. We show that these ensembles are dynamic: while initially the virtual ensemble is mostly at depths lower than half the network's depth, as training progresses, it becomes deeper and deeper. The main mechanism that controls the dynamic ensemble behavior is the scaling introduced, e.g., by the Batch Normalization technique. We explain this behavior and demonstrate the driving force behind it. As a main tool in our analysis, we employ generalized spin glass models, which we also use in order to study the number of critical points in the optimization of Residual Networks.
\end{abstract}

\section{Introduction}

Residual Networks~\citep{He2015} (ResNets) are neural networks with skip connections. These networks, which are a specific case of Highway Networks~\citep{srivastava2015highway}, present state of the art results in the most competitive computer vision tasks including image classification and object detection.

The success of residual networks was attributed to the ability to train very deep networks when employing skip connections~\citep{he2016identity}. A complementary view is presented by~\cite{Veit2016}, who attribute it to the power of ensembles and present an unraveled view of ResNets that depicts ResNets as an ensemble of networks that share weights, with a binomial depth distribution around half depth. They also present experimental evidence that short paths of lengths shorter than half-depth dominate the ResNet gradient during training. 

The analysis presented here shows that ResNets are ensembles with a dynamic depth behavior. When starting the training process, the ensemble is dominated by shallow networks, with depths lower than half-depth. As training progresses, the effective depth of the ensemble increases. This increase in depth allows the ResNet to increase its effective capacity as the network becomes more and more accurate. 

Our analysis reveals the mechanism for this dynamic behavior and explains the driving force behind it.  This mechanism remarkably takes place within the parameters of Batch Normalization~\citep{ioffe2015batch}, which is mostly considered as a normalization and a fine-grained whitening mechanism that addresses the problem of internal covariate shift and allows for faster learning rates.

We show that the scaling introduced by batch normalization determines the depth distribution in the virtual ensemble of the ResNet. These scales dynamically grow as training progresses, shifting the effective ensemble distribution to bigger depths.

The main tool we employ in our analysis is spin glass models.~\cite{yann} have created a link between conventional networks and such models, which leads to a comprehensive study of the critical points of neural networks based on the spin glass analysis of~\cite{spin}. In our work, we generalize these results and link ResNets to generalized spin glass models. These models allow us to analyze the dynamic behavior presented above. Finally, we apply the results of~\cite{spin2} in order to study the loss surface of ResNets.

\section{A recap of~\cite{yann}}

\label{sec:recap}

We briefly summarize~\cite{yann}, which connects the loss function of multilayer networks with the hamiltonian of the p spherical spin glass model, and state their main contributions and results. The notations of our paper are summarized in Appendix~\ref{sec:notations} and slightly differ from those in~\cite{yann}.

A simple feed forward fully connected network $\mathcal{N}$, with $p$ layers and a single output unit is considered. Let $n_i$ be the number of units in layer $i$, such that $n_0$ is the dimension of the input, and $n_p = 1$. It is further assumed that the ReLU activation functions denoted by $\mathcal{R}()$ are used. The output $Y$ of the network given an input vector $x \in R^{d}$ can be expressed as
\begin{equation}
Y = \sum_{i=1}^{d}\sum_{j=1}^{\gamma}x_{ij}A_{ij}\prod_{k=1}^{p}w_{ij}^{(k)},
\end{equation}
where the first summation is over the network inputs $x_1...x_{d}$, and the second is over all paths from input to output. There are $\gamma = \prod_{i=1}^{p}n_i$ such paths and $\forall i,~~x_{i1}=x_{i2}=...x_{i\gamma}$. The variable $A_{ij} \in~\{0,1\}$ denotes whether the path is active, i.e., whether all of the ReLU units along this path are producing positive activations, and the product $\prod_{k=1}^{p}w_{ij}^{(k)}$ represents the specific weight configuration $w_{ij}^1...w_{ij}^k$ multiplying $x_i$ given path $j$. It is assumed throughout the paper that the input variables are sampled i.i.d from a normal Gaussian distribution.

\begin{definition}
The mass of the network $\mathcal{N}$ is defined as $\psi = \prod_{i=0}^{p}n_i$.
\end{definition}

$A_{ij}$ are modeled as independent Bernoulli random variables with a success probability $\rho$, i.e., each path is equally likely to be active. Therefore,
\begin{equation}
\E_A[Y] = \sum_{i=1}^{d}\sum_{j=1}^{\gamma}x_{ij}\rho\prod_{k=1}^{p}w_{ij}^{(k)}.
\end{equation}

The task of binary classification using the network $\mathcal{N}$ with parameters $\bf{w}$ is considered, using either the hinge loss $\mathcal{L}_{\mathcal{N}}^h$ or the absolute loss $\mathcal{L}_{\mathcal{N}}^a$:
\begin{equation}\label{eq:losses}
\mathcal{L}_{\mathcal{N}}^h(\bm{w}) = \E_A[max(0,1-Y_xY)],~~~~~ \mathcal{L}_{\mathcal{N}}^a(\bm{w}) = \E_A[|Y_x-Y|]
\end{equation}
where $Y_x$ is a random variable corresponding to the true label of sample $x$.
In order to equate either loss with the hamiltonian of the p-spherical spin glass model, a few key approximations are made:
\begin{description}[leftmargin=!,labelwidth=\widthof{\bfseries A4}]
\item[A1] Variable independence - The inputs $x_{ij}$ are modeled as independent normal Gaussian random variables. 
\item[A2] Redundancy in network parameterization - It is assumed the set of all the network weights $[w_1,w_2...w_N]$ contains only $\Lambda$ unique weights such that $\Lambda < N$. 
\item[A3] Uniformity - It is assumed that all unique weights are close to being evenly distributed on the graph of connections defining the network $\mathcal{N}$. Practically, this means that we assume every node is adjacent to an edge with any one of the $\Lambda$ unique weights.
\item[A4] Spherical constraint - The following is assumed:
\begin{equation}
\frac{1}{\Lambda}\sum_{i=1}^{\Lambda}w_i^2 = C^2
\end{equation}
for some constant $C>0$.
\end{description}

These assumptions are made for the sake of analysis and do not hold. For example, {\bf A1} does not hold since each input $x_i$ is associated with many different paths and $x_{i1} = x_{i2} = ...x_{i\gamma}$. See~\cite{yann} for further justification of these approximations.  

Under {\bf A1}--{\bf A4}, the loss takes the form of a centered Gaussian process on the sphere $S^{\Lambda-1}(\sqrt{\Lambda})$. Specifically, it is shown to resemble the hamiltonian of the a spherical p-spin glass model given by:
\begin{equation}
\mathcal{H}_{p,\Lambda}(\bm{\tilde{w}}) = \frac{1}{\Lambda^{\frac{p-1}{2}}}\sum_{i_1...i_p}^{\Lambda}x_{i_1...i_p}\tilde{w}_{i_1}\tilde{w}_{i_2}...\tilde{w}_{i_p}
\end{equation}
with spherical constraint
\begin{equation}\label{eq:sphere}
\frac{1}{\Lambda}\sum_{i=1}^{\Lambda}\tilde{w}_i^2 = 1
\end{equation}
where $x_{i_1...i_p}$ are independent normal Gaussian variables.

In~\cite{spin}, the asymptotic complexity of spherical p spin glass model is analyzed based on random matrix theory.
In~\cite{yann} these results are used in order to shed light on the optimization process of neural networks. For example, the asymptotic complexity of spherical spin glasses reveals a layered structure of low-index critical points near the global optimum. These findings are then given as a possible explanation to several central phenomena found in neural networks optimization, such as similar performance of large nets, and the improbability of getting stuck in a ``bad'' local minima. 

As part of our work, we follow a similar path. First, a link is formed between residual networks and the general multi interaction spherical spin glass model. Then, using~\cite{spin2}, we obtain insights on residual networks. The other part of our work studies the dynamic behavior of neural networks using the same spin glass models.

\section{Residual nets and general spin glass models}
We begin by establishing a connection between the loss function of deep residual networks and the hamiltonian of the general spherical spin glass model.
We consider a simple feed forward fully connected network $\mathcal{N}$, with ReLU activation functions and residual connections. For simplicity of notations without the loss of generality, we assume $n_1 = ... = n_{p} = n$. $n_0=d$ as before. In our ResNet model, there exist $p-1$ identity connections skipping a single layer each, starting from the first hidden layer. The output of layer $l>1$ is given by:
\begin{equation}
\label{eq:reslayer}
\mathcal{N}_l(x) = \mathcal{R}(W_l^\top \mathcal{N}_{l-1}(x)) + \mathcal{N}_{l-1}(x)
\end{equation}
where $W_l$ denotes the weight matrix connecting layer $l-1$ with layer $l$. Notice that the first hidden layer has no parallel skip connection, and so $\mathcal{N}_1(x) = \mathcal{R}(W_1^\top x)$. Without loss of generality, the scalar output of the network is the sum of the outputs of the output layer $p$ and is expressed as
\begin{equation}\label{eq:firstmodel}
Y = \sum_{r=1}^{p}\sum_{i=1}^{d}\sum_{j=1}^{\gamma_r}x_{ij}^{(r)}A_{ij}^{(r)}\prod_{k=1}^{r}w_{ij}^{(r)(k)}
\end{equation}
where $A_{ij}^{(r)} \in \{0,1\}$ denotes whether path $j$ of length $r$ is open, and $\forall j,j',r,r' ~~x_{ij}^{r}=x_{ij'}^{r'}$.
The residual connections in $\mathcal{N}$ imply that the output $Y$ is now the sum of products of different lengths, indexed by $r$. Each path of length $r$ includes $r-1$ non-skip connections  (those involving the first term in Eq.~\ref{eq:reslayer} and not the second, identity term) out of layers $l=2..p$. Therefore, $\gamma_r = \binom{p-1}{r-1} n^r$. We define the following measure on the network:
\begin{definition}
The mass of a depth $r$ subnetwork in $\mathcal{N}$ is defined as $\psi_r = d\gamma_r$.
\end{definition}

The properties of redundancy in network parameters and their uniform distribution, as described in Sec.~\ref{sec:recap}, allow us to re-index Eq.~\ref{eq:firstmodel}. 
\begin{lemma}
\label{lem:one}
Assuming assumptions $\bf{A2}-\bf{A3}$ hold, and $\frac{n}{\Lambda} \in \mathbb{Z}$, then the output can be expressed after reindexing as:
\begin{equation}\label{eq:model}
Y = \sum_{r=1}^{p}\sum_{i_1,i_2...i_r=1}^{\Lambda}\sum_{j=1}^{\frac{\psi_r}{\Lambda^{r}}} x_{i_1,i_2...i_r}^{(j)}A_{i_1,i_2...i_r}^{(j)}\prod_{k=1}^{r}w_{i_k}.
\end{equation}

{\it All proofs can be found in Appendix~\ref{sec:proofs}.} 
\end{lemma}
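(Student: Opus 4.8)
The plan is to prove the lemma one path length $r$ at a time, matching the outer sum over $r$ in Eq.~\ref{eq:firstmodel} with the one in Eq.~\ref{eq:model} term by term, and for each $r$ grouping the length-$r$ paths by the ordered sequence of unique weights they traverse. First I would fix $r\in\{1,\dots,p\}$ and look at the inner double sum $\sum_{i=1}^{d}\sum_{j=1}^{\gamma_r} x_{ij}^{(r)}A_{ij}^{(r)}\prod_{k=1}^{r}w_{ij}^{(r)(k)}$, which runs over the $\psi_r = d\gamma_r$ pairs formed by an input coordinate $i$ and a length-$r$ path $j$. By assumption \textbf{A2} each factor $w_{ij}^{(r)(k)}$ is one of the $\Lambda$ unique weights $w_1,\dots,w_\Lambda$, so to every pair $(i,j)$ one may attach a multi-index $(i_1,\dots,i_r)\in\{1,\dots,\Lambda\}^r$ recording which unique weight occupies each of the $r$ non-identity positions along the path, and then $\prod_{k=1}^{r}w_{ij}^{(r)(k)} = \prod_{k=1}^{r}w_{i_k}$. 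This induces a partition of the $\psi_r$ pairs into $\Lambda^r$ classes, one per multi-index.

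The core step is counting each class. Assumption \textbf{A3} says the $\Lambda$ unique weights sit evenly on the edges of the connection graph, so that from any node every one of the $\Lambda$ weights is available on an outgoing edge; applying this layer by layer along a length-$r$ path shows that the $\psi_r$ (input, path) pairs spread evenly over the $\Lambda^r$ weight sequences, i.e.\ each class has exactly $\psi_r/\Lambda^r = d\binom{p-1}{r-1}(n/\Lambda)^r$ members. The hypothesis $n/\Lambda\in\mathbb{Z}$ is precisely what forces this quantity to be a positive integer, making the balanced partition consistent. It then remains to re-label: within the class indexed by $(i_1,\dots,i_r)$, enumerate its members by $j = 1,\dots,\psi_r/\Lambda^r$ and rename the attached data $x_{ij}^{(r)}\mapsto x_{i_1\dots i_r}^{(j)}$ and $A_{ij}^{(r)}\mapsto A_{i_1\dots i_r}^{(j)}$; substituting back and summing over $r$ gives Eq.~\ref{eq:model}.

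I expect the main obstacle to be the counting argument: \textbf{A3} is stated as a local condition on the edges incident to a single node, whereas what is actually required is uniformity at the level of whole paths — that each of the $\Lambda^r$ ordered weight sequences is realized by the same number of (input, path) pairs. Promoting the local edge condition to this global product structure is exactly where the idealized nature of \textbf{A3} is invoked, and it is the residual-network analogue of the re-indexing step performed for plain feed-forward networks in~\cite{yann}. Once that uniformity is granted, the remaining bookkeeping over $i$, $j$ and $r$ is routine, and the divisibility assumption only serves to keep the per-class counts integral.
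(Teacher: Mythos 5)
Your argument is the same as the paper's: count the $\psi_r$ (input, path) pairs of each length $r$, use the uniformity assumption \textbf{A3} to conclude that each of the $\Lambda^r$ ordered weight configurations is realized exactly $\psi_r/\Lambda^r$ times, and re-index; the paper's proof is precisely this counting-and-relabeling step, stated more tersely. Your additional remarks on the role of $n/\Lambda\in\mathbb{Z}$ and on where the idealization in \textbf{A3} is actually invoked are accurate elaborations rather than a different route.
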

Making the modeling assumption that the ReLU gates are independent Bernoulli random variables with probability $\rho$, we obtain that for every path of length $r$, $\E A_{i_1,i_2...i_r}^{(j)} = \rho^r$ and
\begin{equation}\label{eq:emodel}
\E_A[Y] = \sum_{r=1}^{p}\sum_{i_1,i_2...i_r=1}^{\Lambda}\sum_{j=1}^{\frac{\psi_r}{\Lambda^{r}}} x_{i_1,i_2...i_r}^{(j)}\rho^r\prod_{k=1}^{r}w_{i_k}.
\end{equation}

In order to connect ResNets to generalized spherical spin glass models, we denote the variables:
\begin{equation}\label{eq:xi}
\xi_{i_1,i_2...i_r} = \sum_{j=1}^{\frac{\psi_r}{\Lambda^{r}}}x_{i_1,i_2...i_r}^{j},~~~~~ \tilde{x}_{i_1,i_2...i_r} = \frac{\xi_{i_1,i_2...i_r}}{{\E_{x}}[\xi_{i_1,i_2...i_r}^2]^\frac{1}{2}}
\end{equation}
Note that since the input variables $x_1...x_d$ are sampled from a centered Gaussian distribution (dependent or not), then the set of variables $\tilde{x}_{i_1,i_2...i_r}$ are dependent normal Gaussian variables.

\begin{lemma}\label{the:var}
Assuming  $\bf{A2}-\bf{A3}$ hold, and $\frac{n}{\Lambda} \in \mathbb{N}$  then $\forall_{r,i_1...i_r}$ the following holds:
\begin{equation}\label{eq:lb}
\frac{1}{d}(\frac{\psi_r}{\Lambda^{r}})^2 \leq \E[\xi_{i_1,i_2...i_r}^2] \leq (\frac{\psi_r}{\Lambda^{r}})^2.
\end{equation}
\end{lemma}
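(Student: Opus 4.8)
The plan is to rewrite $\xi_{i_1,\ldots,i_r}$ as a linear combination of the $d$ genuine input coordinates and then read off both bounds from elementary inequalities. By Lemma~\ref{lem:one} we may work with the reindexed expression~\eqref{eq:model}, in which each summand $x^{(j)}_{i_1,\ldots,i_r}$ appearing in the definition~\eqref{eq:xi} of $\xi_{i_1,\ldots,i_r}$ is, because of the identification $x^{r}_{ij}=x^{r'}_{ij'}$, literally one of the original inputs $x_1,\ldots,x_d$. Grouping the $\psi_r/\Lambda^r$ inner terms according to which input they equal, I would write
\begin{equation}
\xi_{i_1,\ldots,i_r} \;=\; \sum_{k=1}^{d} c_k\, x_k, \qquad c_k\in\mathbb{Z}_{\ge 0}, \qquad \sum_{k=1}^{d} c_k \;=\; \frac{\psi_r}{\Lambda^{r}},
\end{equation}
where $c_k = c_k(i_1,\ldots,i_r)$ is the multiplicity of $x_k$ among those terms. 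The hypothesis $\frac{n}{\Lambda}\in\mathbb{N}$, together with \textbf{A2}--\textbf{A3}, is exactly what makes $\psi_r/\Lambda^{r}$ an integer and the multiplicities balanced ($c_k\approx \psi_r/(d\Lambda^{r})$); however, the two bounds below will follow for any admissible nonnegative multiplicities, so I would not lean on the exact value of the $c_k$.

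From here I would expand $\E[\xi_{i_1,\ldots,i_r}^{2}] = \sum_{k,k'} c_k c_{k'}\,\E[x_k x_{k'}]$. For the upper bound, since the inputs are standard Gaussians, Cauchy--Schwarz gives $|\E[x_k x_{k'}]|\le\sqrt{\E[x_k^2]\E[x_{k'}^2]}=1$, and because every product $c_k c_{k'}$ is nonnegative we obtain $\E[\xi_{i_1,\ldots,i_r}^{2}]\le\big(\sum_k c_k\big)^{2}=(\psi_r/\Lambda^{r})^{2}$ --- the bound being attained in the degenerate fully-correlated case, which is why the remark preceding the lemma insists on ``dependent or not''. For the lower bound I would use independence of $x_1,\ldots,x_d$ (assumption \textbf{A1}), so that $\E[x_k x_{k'}]=\delta_{kk'}$ and $\E[\xi_{i_1,\ldots,i_r}^{2}]=\sum_k c_k^{2}$; the power-mean (Cauchy--Schwarz) inequality $\sum_k c_k^{2}\ge \tfrac1d\big(\sum_k c_k\big)^{2}$ then yields $\E[\xi_{i_1,\ldots,i_r}^{2}]\ge\tfrac1d(\psi_r/\Lambda^{r})^{2}$, with equality precisely when the $c_k$ are all equal.

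The one point that needs care is the lower bound: as stated it is really a statement about uncorrelated (or at least nonnegatively correlated) inputs, and this is where \textbf{A1} genuinely enters --- an adversarial negative correlation among the $x_k$ could in principle push $\E[\xi_{i_1,\ldots,i_r}^2]$ strictly below $\tfrac1d(\psi_r/\Lambda^{r})^{2}$. I would therefore present the lower bound within the i.i.d.\ input model of \textbf{A1}, note that under an exactly uniform path-to-input assignment it is in fact an equality, and observe that positive input correlations only move $\E[\xi_{i_1,\ldots,i_r}^2]$ upward toward the upper bound. Apart from this bookkeeping, the argument is a two-line application of Cauchy--Schwarz, so I do not anticipate any substantive obstacle.
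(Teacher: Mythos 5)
Your proof is correct and follows essentially the same route as the paper's: both write $\xi_{i_1,\ldots,i_r}$ as a nonnegative-integer combination $\sum_k c_k x_k$ with $\sum_k c_k = \psi_r/\Lambda^r$, get the lower bound from $\sum_k c_k^2 \ge \tfrac{1}{d}\big(\sum_k c_k\big)^2$, and get the upper bound from the extremal (single-spike) configuration, your Cauchy--Schwarz version of the upper bound being if anything slightly more general since it tolerates correlated inputs. One small bookkeeping remark: the uncorrelatedness needed for the lower bound comes from the paper's standing assumption that the $d$ genuine inputs are sampled i.i.d.\ (stated in the setup of Section 2), not from \textbf{A1}, which is the fictitious independence of the path-indexed copies $x_{ij}$.
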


We approximate the expected output $\bm{E}_A(Y)$ with $\hat{Y}$ by assuming the minimal value in \ref{eq:lb} holds such that  $\forall_{r,i_1...i_r}~~\E[\xi_{i_1,i_2...i_r}^2] = \frac{1}{d}(\frac{\psi_r}{\Lambda^{r}})^2$.
The following expression for $\hat{Y}$ is thus obtained:
\begin{equation}\label{eq:dist}
\hat{Y} = \sum_{r=1}^{p}(\frac{\rho}{\Lambda})^r\frac{\psi_r}{\sqrt{d}}\sum_{i_1,i_2...i_r=1}^{\Lambda}\tilde{x}_{i_1,i_2...i_r}\prod_{k=1}^{r}w_{i_k}.
\end{equation}
The independence assumption {\bf A1} was not assumed yet, and \ref{eq:dist} holds regardless. Assuming {\bf A4} and denoting the scaled weights $\tilde{w_i} = \frac{1}{C}w_i$, we can link the distribution of $\hat{Y}$ to the distribution on $\tilde x$:
\begin{multline}\label{eq:logits}
\hat{Y} = \sum_{r=1}^{p}\frac{\psi_r}{\sqrt{d}}(\frac{\rho C}{\Lambda})^r\sum_{i_1,i_2...i_r=1}^{\Lambda}\tilde{x}_{i_1,i_2...i_r}\prod_{k=1}^{r}\tilde{w}_{i_k}\\
=z\sum_{r=2}^{p}\frac{\epsilon_r}{\Lambda^{\frac{r-1}{2}}}\sum_{i_1,i_2...i_r=1}^{\Lambda}\tilde{x}_{i_1,i_2...i_r}\prod_{k=1}^{r}\tilde{w}_{i_k}
\end{multline}
where $\epsilon_r=\epsilon_r = \frac{1}{z}{\binom{p-1}{r-1}}(\frac{\rho n C}{\sqrt{\Lambda}})^r$ and $z$ is a normalization factor such that $\sum_{r=1}^{p}\epsilon_r^2=1$.

The following lemma gives a generalized expression for the binary and hinge losses of the network.
\begin{lemma}[~\cite{yann}]
Assuming assumptions $\bf{A2}-\bf{A4}$ hold, then both the losses $\mathcal{L}_{\mathcal{N}}^h(x)$ and $\mathcal{L}_{\mathcal{N}}^a(x)$ can be generalized to a distribution of the form:
\begin{equation}\label{eq:loss}
C_1 + C_2\sum_{r=1}^{p}\frac{\epsilon_r}{\Lambda^{\frac{r-1}{2}}} \sum_{i_1,i_2...i_r=1}^{\Lambda}\tilde{x}_{i_1,i_2...i_r}\prod_{k=1}^{r}\tilde{w}_{i_k}
\end{equation}
where $C_1,C_2$ are positive constants that do not affect the optimization process, and will be omitted in the following sections.
\end{lemma}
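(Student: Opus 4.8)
The plan is to follow the reduction in \cite{yann} almost verbatim, since the residual structure has already been absorbed into the coefficients $\epsilon_r$ by Eq.~\ref{eq:logits}. The starting point is Eq.~\ref{eq:losses}: for the absolute loss we have $\mathcal{L}_{\mathcal{N}}^a(\bm w) = \E_A[|Y_x - Y|]$, and for the hinge loss $\mathcal{L}_{\mathcal{N}}^h(\bm w) = \E_A[\max(0, 1 - Y_x Y)]$. First I would push the expectation over the ReLU activations $A$ inside, so that $Y$ is replaced by $\E_A[Y]$; this is the step that invokes the Bernoulli independence modeling assumption and, combined with the variance approximation from Lemma~\ref{the:var}, lets us replace $\E_A[Y]$ with $\hat Y$ as given in Eq.~\ref{eq:dist}. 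The key observation is that $Y_x \in \{-1,+1\}$ is a label, so $|Y_x - Y| = |Y_x||1 - Y_x Y| = |1 - Y_x Y|$ (using $Y_x^2 = 1$), and similarly $Y_x Y$ has the same distribution as $Y$ up to sign because $Y_x$ is a symmetric sign variable independent of the Gaussian disorder $\tilde x$. So both losses become, up to the sign symmetry, a function of the single centered Gaussian field $\hat Y$.

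Next I would handle the two loss shapes uniformly. Write $f(u) = |u|$ for the absolute loss (after the substitution above, with $u = 1 - Y_x Y$, but the constant $1$ and the distribution of $Y_x$ only contribute an affine shift once we linearize) and $f(u) = \max(0,u)$ for hinge. The trick used in \cite{yann} is to expand $f$ around the expected value and keep only the leading linear term, or equivalently to use the fact that for a centered Gaussian field $g$ one has $\E[|a - g|] = a \cdot \mathrm{erf}(\cdot) + (\text{const})\cdot \E[g \cdot \mathrm{sign}(\cdot)] + \dots$, so that the fluctuating part is proportional to $\hat Y$ itself (this is where the positive constants $C_1, C_2$ are born: $C_1$ collects the deterministic part and $C_2 > 0$ the coefficient multiplying the Gaussian field). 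Since $\hat Y$ from Eq.~\ref{eq:dist}, after imposing the spherical constraint {\bf A4} and rescaling $\tilde w_i = w_i / C$, is exactly the expression in Eq.~\ref{eq:logits}, i.e.\ $z \sum_{r} \frac{\epsilon_r}{\Lambda^{(r-1)/2}} \sum_{i_1 \dots i_r} \tilde x_{i_1 \dots i_r} \prod_k \tilde w_{i_k}$, folding the factor $z C_2 > 0$ into a redefined $C_2$ yields precisely Eq.~\ref{eq:loss}.

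Concretely the ordered steps are: (i) use $Y_x^2 = 1$ and symmetry of the label to reduce $\mathcal{L}^a$ and $\mathcal{L}^h$ to expectations of a fixed convex scalar function of $Y$; (ii) apply the Bernoulli modeling of $A$ plus Lemma~\ref{the:var} (with the minimal-variance approximation) to replace $\E_A[Y]$ by $\hat Y$ of Eq.~\ref{eq:dist}; (iii) invoke {\bf A4} and the rescaling to rewrite $\hat Y$ in the normalized form of Eq.~\ref{eq:logits} with the coefficients $\epsilon_r$ and normalization $z$; (iv) linearize the scalar loss function about its mean (or compute the Gaussian expectation in closed form and discard higher-order curvature terms, as justified in \cite{yann}), producing an affine function $C_1 + C_2 \cdot (\text{Gaussian field})$ with $C_2 > 0$; (v) absorb $z$ into $C_2$ to land on Eq.~\ref{eq:loss}. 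I expect the main obstacle — really the main point where one is waving hands rather than proving — to be step (iv): justifying that the loss is well-approximated by its linearization, i.e.\ that the quadratic and higher terms in the expansion of $|\cdot|$ or $\max(0,\cdot)$ are negligible relative to the linear term. This is exactly the approximation \cite{yann} make and defer justification for, so here I would simply cite their argument rather than reprove it; the residual-network generalization contributes nothing new to this difficulty, since all the new structure is already packaged inside the $\epsilon_r$.
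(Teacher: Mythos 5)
Your proposal is essentially the right reconstruction, and it is worth noting up front that the paper itself offers no proof of this lemma at all: it is imported wholesale from \cite{yann} (hence the citation in the lemma header), with the only new content being that the Gaussian field into which the loss collapses is now the depth-mixed field of Eq.~\ref{eq:logits} rather than a single-depth one --- exactly the point you make when you say the residual structure is already packaged inside the $\epsilon_r$. Your steps (i)--(iii) and (v) track the intended argument faithfully, and the identity $|Y_x - Y| = |1 - Y_x Y|$ (from $Y_x^2=1$) is a clean way to unify the two losses. The one place where your route genuinely differs from \cite{yann} is step (iv): their reduction is not a Taylor linearization about the mean nor a closed-form Gaussian expectation with discarded curvature terms. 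Instead they exploit the fact that both $|\cdot|$ and $\max(0,\cdot)$ are \emph{piecewise linear}, and add a modeling assumption that the active branch (the sign of $Y_x - Y$, respectively whether the hinge is active) can be treated as fixed relative to the remaining randomness; conditional on that branch the expected loss is \emph{exactly} affine in $\E_A[Y]$, so there are no higher-order terms to control. Your version buys a statement that does not need the branch-fixing assumption but pays for it with a genuinely unjustified truncation of an expansion; theirs is exact given a cruder assumption. Either way the weak point is the same commutation of $\E_A$ with the nonlinearity, and since this paper simply cites \cite{yann} for it, deferring as you do is consistent with what the authors themselves have done. One cosmetic remark: the target expression sums from $r=1$ while the second line of Eq.~\ref{eq:logits} starts at $r=2$; this is an inconsistency in the paper, not in your argument.
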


The model in Eq.~\ref{eq:loss} has the form of a spin glass model, except for the dependency between the variables ${\tilde x}_{i_1,i_2...i_r}$. We later use an assumption similar to ${\bf A1}$ of independence between these variables in order to link the two binary classification losses and the general spherical spin glass model. However, for the results in this section, this is not necessary.

We denote the important quantities:
\begin{equation}\label{eq:ep}
\beta = \frac{\rho n C}{\sqrt{\Lambda}},~~~\epsilon_r = \frac{1}{z}{\binom{p-1}{r-1}}\beta^r
\end{equation}
The series $(\epsilon_r)_{r=1}^p$ determines the weight of interactions of a specific length in the loss surface. Notice that for constant depth $p$ and large enough $\beta$, $\argmax_r(\epsilon_r) = p$. Therefore, for wide networks, where $n$ and, therefore, $\beta$ are large, interactions of order $p$ dominate the loss surface, and the effect of the residual connections diminishes. Conversely, for constant $\beta$ and a large enough $p$ (deep networks), we have that $\argmax_r(\epsilon_r)<p$, and can expect interactions of order $r<p$ to dominate the loss. The asymptotic behavior of $\bm{\epsilon}$ is captured by the following lemma:
\begin{theorem}\label{the:max}
Assuming $\frac{\beta}{1+\beta}p\in \mathbb{N}$, we have that:
\begin{equation}
\lim_{p\to\infty} \frac{1}{p}\argmax_r(\epsilon_r) = \frac{\beta}{1+\beta}
\end{equation}
\end{theorem}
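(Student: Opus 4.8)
The plan is to reduce the claim to locating the mode of the sequence $f(r) := \binom{p-1}{r-1}\beta^r$, $r = 1,\dots,p$. Since the normalization constant $z$ in Eq.~\ref{eq:ep} does not depend on $r$, we have $\argmax_r \epsilon_r = \argmax_r f(r)$, so it suffices to understand $f$. The natural tool is the consecutive ratio
\[
\frac{f(r+1)}{f(r)} = \frac{\binom{p-1}{r}}{\binom{p-1}{r-1}}\,\beta = \frac{p-r}{r}\,\beta ,
\]
valid for $1 \le r \le p-1$. This ratio is strictly decreasing in $r$, which immediately gives that $f$ is unimodal: it strictly increases while the ratio exceeds $1$ and strictly decreases once it drops below $1$.

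Next I would solve $\frac{p-r}{r}\beta \ge 1$, which is equivalent to $r(1+\beta) \le \beta p$, i.e.\ $r \le r_0$ where $r_0 := \frac{\beta p}{1+\beta}$. By the integrality hypothesis $r_0 \in \mathbb{N}$, and since $0 < \frac{\beta}{1+\beta} < 1$ we have $1 \le r_0 < p$ for $p$ large, so $r_0$ is an interior index. Combining with unimodality: $f(r) < f(r+1)$ for $r < r_0$, $f(r_0) = f(r_0+1)$ (the ratio equals exactly $1$ there), and $f(r) > f(r+1)$ for $r > r_0$. Hence the set of maximizers of $f$ --- equivalently of $\epsilon_r$ --- is exactly $\{r_0,\, r_0+1\}$.

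Therefore, whichever element one designates as $\argmax_r \epsilon_r$, dividing by $p$ gives a value in $\{\frac{\beta}{1+\beta},\, \frac{\beta}{1+\beta}+\frac1p\}$, and letting $p\to\infty$ yields the claimed limit. The computation is elementary; the only points needing care are checking the monotonicity of the ratio $\frac{p-r}{r}\beta$ (so that unimodality, and thus near-uniqueness of the maximizer, holds) and the boundary tie at $r = r_0$, where two consecutive indices share the maximal value. Neither affects the statement, since this ambiguity is of size $O(1)$ while $\argmax_r \epsilon_r$ is of order $p$; indeed, dropping the hypothesis $\frac{\beta}{1+\beta}p \in \mathbb{N}$ one obtains instead a maximizer in $\{\lfloor r_0\rfloor, \lceil r_0 \rceil\}$, which still converges to $\frac{\beta}{1+\beta}$ after rescaling.
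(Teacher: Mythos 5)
Your proof is correct, but it takes a different route from the paper's. The paper's proof applies the Stirling approximation to obtain the exponential rate $\lim_{p\to\infty}\frac{1}{p}\log\bigl(\binom{p}{\alpha p}\beta^{\alpha p}\bigr) = H(\alpha)+\alpha\log\beta$ with $H$ the binary entropy, and then identifies the maximizer $\alpha^*=\frac{\beta}{1+\beta}$ of this rate function; it is a large-deviations-style argument that, strictly speaking, still requires an extra (implicit) step to pass from ``the rate function is maximized at $\alpha^*$'' to ``the rescaled discrete argmax converges to $\alpha^*$.'' Your argument instead works exactly at finite $p$: the consecutive ratio $\frac{f(r+1)}{f(r)}=\frac{p-r}{r}\beta$ is strictly decreasing, so $f$ is unimodal, and the integrality hypothesis forces an exact tie at $r_0=\frac{\beta p}{1+\beta}$, so the maximizer set is precisely $\{r_0,r_0+1\}$ and the limit is immediate. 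This is more elementary and more precise (it pins down the argmax to within one index for every $p$, and explains why the integrality hypothesis can be dropped at the cost of replacing $r_0$ by $\lfloor r_0\rfloor$ or $\lceil r_0\rceil$), whereas the paper's entropy formulation is the one that feeds directly into the concentration statement of Thm.~\ref{the:lim}. Both establish the theorem.
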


As the next theorem shows, the epsilons are concentrated in a narrow band near the maximal value. 
\begin{theorem}\label{the:lim}
For any ${\alpha_1 <\frac{\beta}{1+\beta} <\alpha_2}$, and assuming $\alpha_1p,\alpha_2p,\frac{\beta}{1+\beta}p \in \mathbb{N}$, it holds that:
\begin{equation}
\lim_{p\to\infty} \sum_{r=\alpha_1p}^{\alpha_2p}\epsilon_r^2=1
\end{equation}
\end{theorem}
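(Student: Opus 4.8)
The plan is to prove the equivalent statement that the two complementary ``tails''
\[
S_- \;:=\; \sum_{r=1}^{\alpha_1 p-1}\epsilon_r^2,\qquad S_+ \;:=\; \sum_{r=\alpha_2 p+1}^{p}\epsilon_r^2
\]
both tend to $0$ as $p\to\infty$; since $\sum_{r=1}^p\epsilon_r^2=1$ by the choice of the normalization constant $z$, this yields $\sum_{r=\alpha_1 p}^{\alpha_2 p}\epsilon_r^2 = 1-S_--S_+\to 1$. Throughout, $\beta=\tfrac{\rho n C}{\sqrt{\Lambda}}>0$ is treated as fixed. Write $\epsilon_r^2 = t_r/z^2$ with $t_r := \binom{p-1}{r-1}^2\beta^{2r}$ and $z^2 = \sum_{s=1}^p t_s$, set $t^\star := \tfrac{\beta}{1+\beta}$ and $r^\star := t^\star p$ (an integer by hypothesis), and introduce the function $g(t) := H_2(t)+t\log_2\beta$ on $[0,1]$, where $H_2(t)=-t\log_2 t-(1-t)\log_2(1-t)$ is the binary entropy. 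A direct computation gives $g'(t)=\log_2\tfrac{(1-t)\beta}{t}$ and $g''<0$, so $g$ is strictly concave with a unique maximizer at $t^\star$; this is exactly the continuous counterpart of Theorem~\ref{the:max}. In particular $g$ is strictly increasing on $[0,t^\star]$ and strictly decreasing on $[t^\star,1]$.

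The next step is to bound $\epsilon_r^2$ uniformly using the standard entropy estimates for binomial coefficients: for all $0\le k\le m$, $\tfrac{1}{m+1}2^{mH_2(k/m)}\le\binom{m}{k}\le 2^{mH_2(k/m)}$. Applying these with $m=p-1$ gives, for every $r$, the upper bound $t_r\le \beta^2\, 2^{2(p-1)g(\frac{r-1}{p-1})}$, and for $r=r^\star$ the lower bound $t_{r^\star}\ge \tfrac{\beta^2}{p^2}2^{2(p-1)g(\frac{r^\star-1}{p-1})}$. Since $\tfrac{r^\star-1}{p-1}\to t^\star$ and $g$ is smooth (hence Lipschitz) near $t^\star\in(0,1)$, we have $g\big(\tfrac{r^\star-1}{p-1}\big)=g(t^\star)+O(1/p)$, so $t_{r^\star}\ge \tfrac{c_1\beta^2}{p^2}2^{2(p-1)g(t^\star)}$ for a constant $c_1>0$. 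Combining with the trivial bound $z^2\ge t_{r^\star}$ yields, for every $r$ and with $C:=1/c_1$ independent of $p$ and $r$,
\[
\epsilon_r^2 \;=\; \frac{t_r}{z^2}\;\le\;\frac{t_r}{t_{r^\star}}\;\le\; C\,p^2\, 2^{\,2(p-1)\left[g\big(\frac{r-1}{p-1}\big)-g(t^\star)\right]}.
\]

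Finally I would evaluate this bound on the two tails. If $r\le\alpha_1 p-1$ then $\tfrac{r-1}{p-1}<\alpha_1<t^\star$, so monotonicity of $g$ on $[0,t^\star]$ gives $g\big(\tfrac{r-1}{p-1}\big)-g(t^\star)\le -\delta_1$ with $\delta_1:=g(t^\star)-g(\alpha_1)>0$ (strict, as $t^\star$ is the unique maximizer); summing the at most $p$ terms gives $S_-\le C\,p^3\,2^{-2(p-1)\delta_1}\to 0$. Symmetrically, if $r\ge\alpha_2 p+1$ then $\tfrac{r-1}{p-1}\ge\alpha_2>t^\star$, so $g\big(\tfrac{r-1}{p-1}\big)-g(t^\star)\le-\delta_2$ with $\delta_2:=g(t^\star)-g(\alpha_2)>0$, whence $S_+\le C\,p^3\,2^{-2(p-1)\delta_2}\to 0$; in both cases the exponential factor dominates the polynomial prefactor. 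The only point requiring genuine care is the lower bound on the normalization constant $z$: everything hinges on $z^2$ being at least its largest summand $t_{r^\star}$ up to the $p^2$ loss coming from the entropy estimate, after which strict concavity of $g$ supplies the exponential gaps $2^{-2(p-1)\delta_i}$ that absorb the polynomial factors. (A purely elementary route, bounding the log-concave sequence $(\epsilon_r^2)$ via its successive ratios $\epsilon_r^2/\epsilon_{r-1}^2=\big(\tfrac{(p-r+1)\beta}{r-1}\big)^2$, reaches the same conclusion, but the entropy estimates make the uniformity in $r$ most transparent.)
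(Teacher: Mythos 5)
Your proof is correct, and it takes a genuinely different route from the paper's. The paper's argument (a sketch) bounds the lower tail by $\alpha_1 p$ times its largest term via Stirling, and handles the normalization by evaluating $z^2=\sum_r\binom{p}{r}^2\beta^{2r}$ exactly through the Legendre-polynomial identity $z^2=(1-\beta^2)^p\mathcal{P}_p\bigl(\tfrac{1+\beta^2}{1-\beta^2}\bigr)$ together with the Laplace--Heine asymptotic $\mathcal{P}_p(x)\sim\tfrac{1}{\sqrt{2\pi p}}\tfrac{(x+\sqrt{x^2-1})^{p+1/2}}{(x^2-1)^{1/4}}$; it then leaves the upper tail and the case $\beta\ge 1$ implicit. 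You instead never compute $z^2$: you lower-bound it by its single term at $r^\star=\tfrac{\beta}{1+\beta}p$ using the entropy estimates $\tfrac{1}{m+1}2^{mH_2(k/m)}\le\binom{m}{k}\le 2^{mH_2(k/m)}$, accept the resulting polynomial loss, and let the strict concavity of $g(t)=H_2(t)+t\log_2\beta$ supply a uniform exponential gap $2^{-2(p-1)\delta_i}$ on both tails. (One can check the two are consistent: $g(t^\star)=\log_2(1+\beta)$, matching the $(1+\beta)^{2p}$ growth of $z^2$ extracted from the Legendre asymptotic.) Your version is self-contained and elementary, treats both tails and all $\beta>0$ uniformly, and makes the exponential-versus-polynomial comparison explicit, at the cost of not yielding the sharp asymptotics of $z^2$ that the paper's generating-function identity provides. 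The only blemishes are cosmetic: your constant $C$ collides with the paper's use of $C$ for the weight scale that defines $\beta$, and your $t^\star$ collides with the sequence $t_r$; rename one of each.
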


Thm.~\ref{the:lim} implies that for deep residual networks, the contribution of weight products of order far away from the maximum $\frac{\beta}{1+\beta}p$ is negligible. The loss is, therefor, similar in complexity to that of an ensemble of potentially shallow conventional nets. In a common weight initialization scheme for neural networks,  $C=\frac{1}{\sqrt{n}}$~\citep{fan-in1,fan-in2}. With this initialization and $\Lambda = n$,  $\beta=\rho$ and the maximal weight is obtained at less than half the network's depth $\lim_{p\to\infty} \argmax_r(\epsilon_r)<\frac{p}{2}$. Therefore,  at the initialization, the loss function is primarily influenced by interactions of considerably lower order than the depth $p$, which facilitates easier optimization.

\section{Dynamic behavior of residual nets}
The expression for the output of a residual net in Eq.~\ref{eq:logits} provides valuable insights into the machinery at work when optimizing such models. Thm.~\ref{the:max} and~\ref{the:lim} imply that the loss surface resembles that of an ensemble of shallow nets (although not a real ensemble due to obvious dependencies), with various depths concentrated in a narrow band. As noticed in~\cite{Veit2016}, viewing ResNets as ensembles of relatively shallow networks helps in explaining some of the apparent advantages of these models, particularly the apparent ease of optimization of extremely deep models, since deep paths barely affect the overall loss of the network. However, this alone does not explain the increase in accuracy of deep residual nets over actual ensembles of standard networks. In order to explain the improved performance of ResNets, we make the following claims:
\begin{enumerate}
\item The mixture vector $\bm{\epsilon}$ determines the distribution of the depths of the networks within the ensemble, and is controlled by the scaling parameter $C$.
\item During training, $C$ changes and causes a shift of focus from a shallow ensemble to deeper and deeper ensembles, which leads to an additional capacity.
\item In networks that employ batch normalization, $C$ is directly embodied as the scale parameter $\lambda$. The starting condition of $\lambda=1$ offers a good starting condition that involves extremely shallow nets.
\end{enumerate}

The next lemma validates item 1 from this list of claims. It shows that we can shift the effective depth to any value by simply controlling $C$.
\begin{lemma}\label{lem:mix}
For any integer $1\leq k \leq p$ there exists a global scaling parameter $C$ such that $\argmax_r(\epsilon_r(\beta)) = k$.
\end{lemma}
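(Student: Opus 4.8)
The plan is to reduce the claim to an elementary statement about the unimodal sequence $a_r := \binom{p-1}{r-1}\beta^r$. Since the normalizer $z$ in $\epsilon_r = \frac{1}{z}\binom{p-1}{r-1}\beta^r$ is positive and independent of $r$, we have $\argmax_r \epsilon_r = \argmax_r a_r$; and since $\beta = \frac{\rho n C}{\sqrt{\Lambda}}$ with $\rho, n, \Lambda$ fixed and positive, prescribing $C$ is equivalent to prescribing any $\beta > 0$. So it suffices to exhibit, for each $k$, a value $\beta > 0$ for which $a_r$ peaks at $r = k$.

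First I would compute the ratio of consecutive terms, $\frac{a_{r+1}}{a_r} = \frac{\binom{p-1}{r}}{\binom{p-1}{r-1}}\beta = \frac{p-r}{r}\beta$, and note that $\frac{p-r}{r}$ is strictly decreasing in $r$. Hence $a_r$ is strictly increasing while this ratio exceeds $1$ and strictly decreasing afterward, so it has a well-defined peak. Next I would write the two conditions characterizing a peak at $r=k$, namely $a_k \ge a_{k-1}$ and $a_k \ge a_{k+1}$, and turn them (via the ratio formula) into the single interval constraint $\beta \in I_k := [\tfrac{k-1}{p-k+1},\, \tfrac{k}{p-k}]$, with the conventions $I_1 = [0, \tfrac{1}{p-1}]$ and $I_p = [p-1, \infty)$ when a neighbor is absent.

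Then I would verify $I_k$ is nonempty: clearing denominators, $\frac{k-1}{p-k+1} \le \frac{k}{p-k}$ is equivalent to $(k-1)(p-k) \le k(p-k+1)$, which simplifies to $-p \le 0$, always true. Picking any $\beta$ in the interior of $I_k$ (to avoid the boundary tie $a_k = a_{k+1}$ that occurs at $\beta = \tfrac{k}{p-k}$) and setting $C = \frac{\beta\sqrt{\Lambda}}{\rho n} > 0$ then yields $\argmax_r \epsilon_r(\beta) = k$, as desired.

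The argument is essentially one short computation, so there is no genuine obstacle; the only points needing care are the edge cases $k=1$ and $k=p$, where one of the two neighbor comparisons is vacuous and $I_k$ becomes a one-sided (and for $k=p$ unbounded) interval, and the selection of an interior point of $I_k$ if one wants the maximizer to be unique rather than merely attained at $k$.
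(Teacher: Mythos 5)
Your proof is correct, and it is actually tighter than the one in the paper. The paper's own argument simplifies $\epsilon_r$ to $\frac{1}{z}\binom{p}{r}\beta^r$, checks the three anchor values $\beta^* = \binom{p}{p/2}$, $1/\beta^*$, and $1$ (which give argmax $p$, $1$, and $p/2$ respectively), and then appeals to ``monotonicity and continuity'' to assert that every intermediate integer $k$ is attained as $\beta$ sweeps through $(0,\infty)$. That step is left implicit: since $\argmax_r$ is an integer-valued step function of $\beta$, one still has to rule out the possibility that it skips a value at a jump. Your argument supplies exactly the missing content: by computing the consecutive ratio $a_{k+1}/a_k = \frac{p-k}{k}\beta$ and observing its strict monotonicity in $k$, you show the sequence is unimodal and you identify the exact interval $I_k = \bigl[\tfrac{k-1}{p-k+1}, \tfrac{k}{p-k}\bigr]$ of $\beta$ for which the peak sits at $k$; the nonemptiness check $(k-1)(p-k) \le k(p-k+1)$ and the observation that consecutive intervals share endpoints show these intervals cover all of $(0,\infty)$ without gaps, so no integer is skipped. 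You also work with the paper's actual coefficient $\binom{p-1}{r-1}$ rather than the simplified $\binom{p}{r}$, and you handle the edge cases $k=1$ and $k=p$ explicitly. Both proofs rest on the same underlying monotonicity, but yours is constructive (it tells you which $C$ to pick for each $k$) and closes the continuity gap that the paper's version glosses over.
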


A simple global scaling of the weights is, therefore, enough to change the loss surface, from an ensemble of shallow conventional nets, to an ensemble of deep nets. This is illustrated in Fig.~\ref{fig}(a-c) for various values of $\beta$.

In order to gain additional insight into this dynamic mechanism, we investigate the derivative of the loss with respect to the scale parameter $C$. By noticing that $\frac{\partial \epsilon_r}{\partial C} = r\frac{\epsilon_r}{C}$, and using Eq.~\ref{eq:loss} we obtain:
\begin{equation}\label{eq:r}
\frac{\partial \mathcal{L}_{\mathcal{N}}(x,\bm{w})}{\partial C} =  \sum_{r=1}^{p}\frac{\epsilon_r}{\Lambda^{\frac{r-1}{2}}}r \sum_{i_1,i_2...i_r=1}^{\Lambda}\tilde{x}_{i_1,i_2...i_r}\prod_{k=1}^{r}\tilde{w}_{i_k}
\end{equation}
Notice that the addition of a multiplier $r$ indicates that the derivative is increasingly influenced by deeper networks.

\subsection{Batch normalization}
Batch normalization has shown to be a crucial factor in the successful training of deep residual networks.
As we will show, batch normalization layers offer an easy starting condition for the network, such that the gradients from early in the training process will originate from extremely shallow paths.

We consider a simple batch normalization procedure, which ignores the additive terms, has the output of each ReLU unit in layer $l$ normalized by a factor $\sigma_l$ and then is multiplied by some parameter $\lambda_l$. The output of layer $l>1$ is therefore:
\begin{equation}\label{eq:bn}
\mathcal{N}_l(x) = \frac{\lambda_l}{\sigma_l}\mathcal{R}(W_l^\top \mathcal{N}_{l-1}(x)) + \mathcal{N}_{l-1}(x)
\end{equation}
where $\sigma_{l}$ is the mean of the estimated standard deviations of various elements in the vector $\mathcal{R}(W_l^\top \mathcal{N}_{l-1}(x))$.
Furthermore, a typical initialization of batch normalization parameters is to set $\forall_l,~\lambda_l = 1$. In this case, providing that units in the same layer have equal variance $\sigma_l$,  the recursive relation $\E[\mathcal{N}_{l+1}(x)_j^2] = 1 + \E[\mathcal{N}_{l}(x)_j^2]$ holds for any unit $j$ in layer $l$. This, in turn, implies that the output of the ReLU units should have increasing variance $\sigma_l^2$ as a function of depth. Multiplying the weight parameters in deep layers with an increasingly small scaling factor $\frac{1}{\sigma_l}$, effectively reduces the influence of deeper paths, so that extremely short paths will dominate the early stages of optimization. We next analyze how the weight scaling, as introduced by batch normalization, provides a driving force for the effective ensemble to become deeper as training progresses.

\begin{figure}
\centering
\begin{tabular}{ccc}
\includegraphics[width=.3\linewidth]{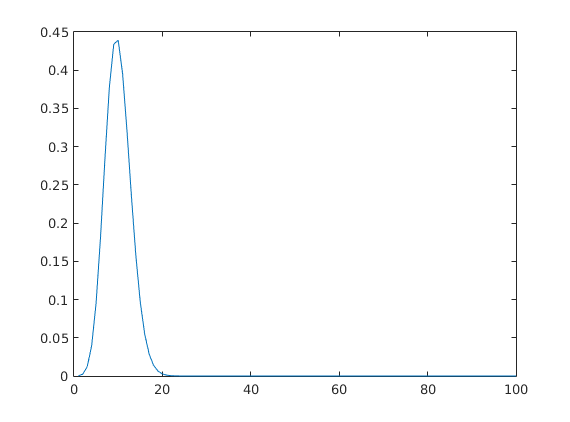}&
\includegraphics[width=.3\linewidth]{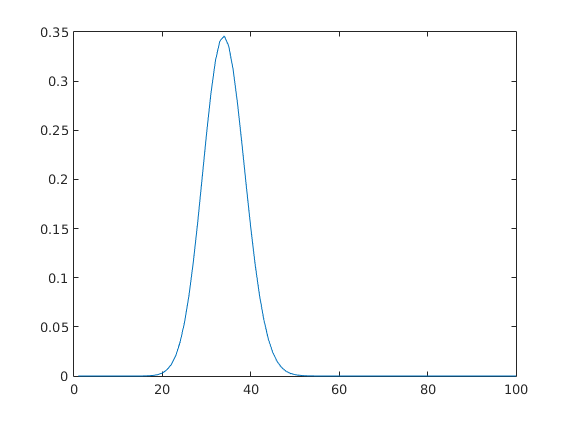}&
\includegraphics[width=.3\linewidth]{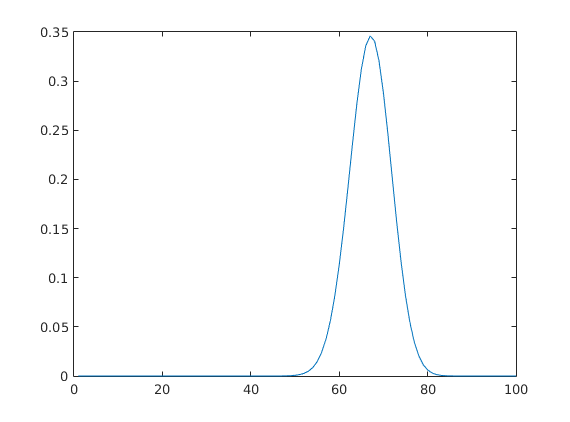}\\
(a) & (b) & (c)\\
\includegraphics[width=.3\linewidth]{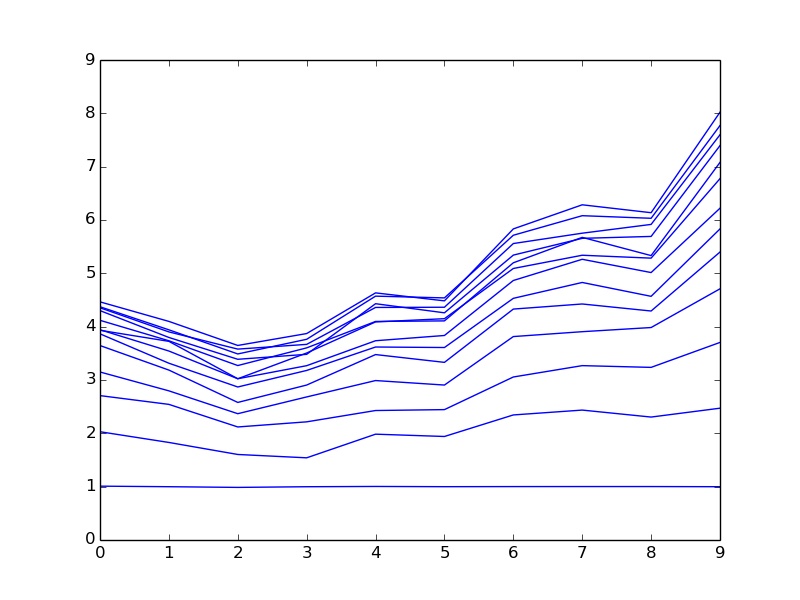}&
\includegraphics[width=.3\linewidth]{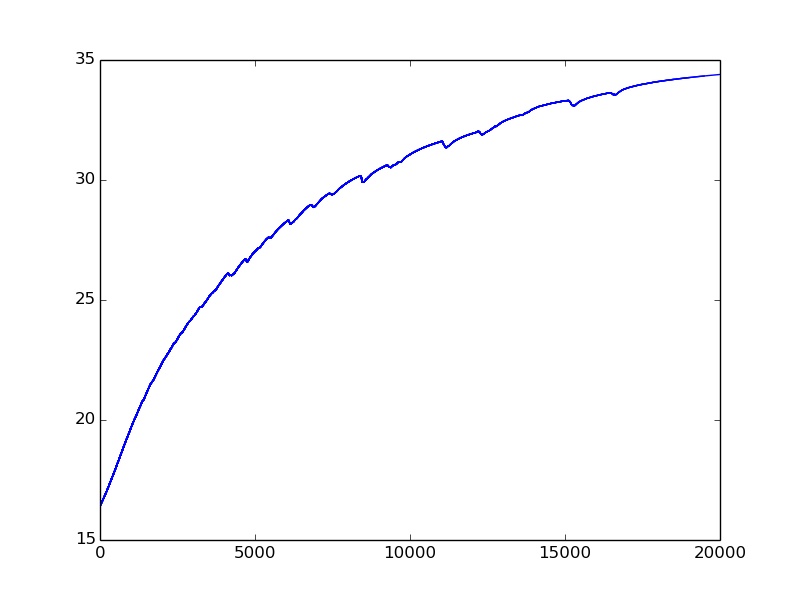}&
\includegraphics[width=.3\linewidth]{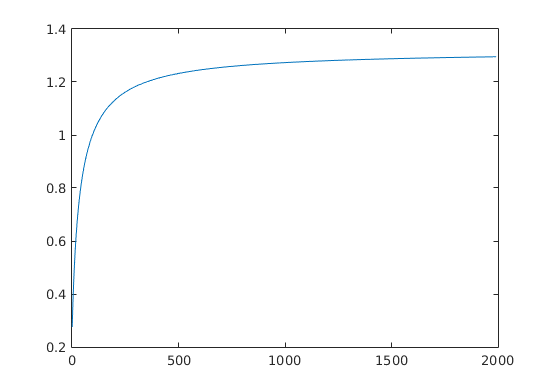}\\
(d) & (e) & (f)\\
\end{tabular}
\caption{(a) A histogram of $\epsilon_r(\beta)$, $r=1..p$, for $\beta = 0.1$ and $p = 100$ . (b) Same for $\beta = 0.5$ (c) Same for $\beta = 2$. (d) Values (y-axis) of the batch normalization parameters $\lambda_l$ (x-axis) for 10 layers ResNet trained to discriminate between 50 multivariate Gaussians. Higher plot lines indicate later stages of training. (e) The norm of the weights of a residual network, which does not employ batch normalization, as a function of the iteration. (f) The asymptotic of the mean number of critical points of a finite index as a function of $\beta$.}
\label{fig}
\end{figure}

\subsection{The driving force behind the scale increase}
In the following analysis, we examine the mechanics of a simple example, which can be extrapolated to more general architectures. 

We consider a simple network of depth $p$, with a single residual connection skipping $p-m$ layers. We further assume that batch normalization is applied at the output of each ReLU unit as described in Eq.~\ref{eq:bn}. We denote by $l_1...l_{m}$ the indices of layers that are not skipped by the residual connection, and $\hat{\lambda}_m = \prod_{i=1}^{m}\frac{\lambda_{l_i}}{\sigma_{l_i}}$, $\hat{\lambda}_p = \prod_{i=1}^{p}\frac{\lambda_i}{\sigma_{i}}$. Since every path of length $m$ is multiplied by $\hat{\lambda}_m$, and every path of length $p$ is multiplied by $\hat{\lambda}_p$, the expression for the loss can be written:
\begin{multline}\label{eq:gbn}
\mathcal{L}_{\mathcal{N}}(x,\bm{w})  
=\frac{\epsilon_m}{\Lambda^{\frac{m-1}{2}}} \hat{\lambda}_m\sum_{i_1,i_2...i_m=1}^{\Lambda}\tilde{x}_{i_1,i_2...i_m}\prod_{k=1}^{m}\tilde{w}_{i_k} + \frac{\epsilon_p}{\Lambda^{\frac{p-1}{2}}} \hat{\lambda}_p\sum_{i_1,i_2...i_p=1}^{\Lambda}\tilde{x}_{i_1,i_2...i_p}\prod_{k=1}^{p}\tilde{w}_{i_k}\\
 = \mathcal{L}_m(x,\bm{w})+\mathcal{L}_p(x,\bm{w})
 \end{multline}
We denote by $\nabla_{\bm{w}}$ the derivative operator with respect to the parameters $\bm{w}$, and the gradient $\bm{g} = \nabla_{\bm{w}}\mathcal{L}_{\mathcal{N}}(x,\bm{w}) =  \bm{g}_m + \bm{g}_p$ evaluated at point $\bm{w}$.
\begin{theorem}\label{the:bndf}
Considering the loss in \ref{eq:gbn}, and assuming  $\frac{\partial \mathcal{L}_{\mathcal{N}}(x,\bm{w})}{\partial \lambda_l} = 0$, then for a small learning rate $0<\mu<<1$ the following hold:
\begin{enumerate}
\item For any $\lambda_l,l \in {l_1...l_m}$, we have:
\begin{equation}
\left|\lambda_l - \mu\frac{\partial \mathcal{L}_{\mathcal{N}}(x,\bm{w} - \mu \bm{g})}{\partial \lambda_l}\right| > \left|\lambda_l\right|
\end{equation}
\item Assuming $\|\bm{g}_p\|_2>\|\bm{g}_m\|_2$, for any $\lambda_l,l \not\in {l_1...l_m}$ we have:
\begin{equation}
\left|\lambda_l - \mu\frac{\partial \mathcal{L}_{\mathcal{N}}(x,\bm{w} - \mu \bm{g})}{\partial \lambda_l}\right| > \left|\lambda_l\right|
\end{equation}
\end{enumerate}
\end{theorem}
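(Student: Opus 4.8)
The plan is to exploit the multiplicative structure of $\hat{\lambda}_m$ and $\hat{\lambda}_p$ in the single variable $\lambda_l$, together with a first‑order expansion of the loss along the gradient step. First I would observe that, since $\hat{\lambda}_m=\prod_i \lambda_{l_i}/\sigma_{l_i}$ and $\hat{\lambda}_p=\prod_{i=1}^p \lambda_i/\sigma_i$ are products in which $\lambda_l$ appears exactly once (in $\hat{\lambda}_m$ iff $l\in\{l_1,\dots,l_m\}$, always in $\hat{\lambda}_p$), one has $\partial\hat{\lambda}_m/\partial\lambda_l=\hat{\lambda}_m/\lambda_l$ and $\partial\hat{\lambda}_p/\partial\lambda_l=\hat{\lambda}_p/\lambda_l$ on the relevant index sets, the $\tilde{w}_{i_k}$ being independent of the $\lambda$'s in the formulation of Eq.~\ref{eq:gbn}. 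Consequently, with $\mathcal{L}_{\mathcal{N}}=\mathcal{L}_m+\mathcal{L}_p$, for $l\in\{l_1,\dots,l_m\}$ we get $\partial\mathcal{L}_{\mathcal{N}}/\partial\lambda_l=\mathcal{L}_{\mathcal{N}}/\lambda_l$, whereas for $l\notin\{l_1,\dots,l_m\}$ we get $\partial\mathcal{L}_{\mathcal{N}}/\partial\lambda_l=\mathcal{L}_p/\lambda_l$. The hypothesis $\partial\mathcal{L}_{\mathcal{N}}(x,\bm{w})/\partial\lambda_l=0$, with $\lambda_l$ finite and nonzero, then forces $\mathcal{L}_{\mathcal{N}}(x,\bm{w})=0$ in part~1 and $\mathcal{L}_p(x,\bm{w})=0$ in part~2.

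Next I would expand the loss components along $\bm{w}\mapsto\bm{w}-\mu\bm{g}$ to first order in $\mu$; note that this step does not move $\lambda_l$, since its corresponding gradient component vanishes by assumption. Because $\bm{g}_m=\nabla_{\bm{w}}\mathcal{L}_m$, $\bm{g}_p=\nabla_{\bm{w}}\mathcal{L}_p$ and $\bm{g}=\bm{g}_m+\bm{g}_p$, Taylor's theorem gives $\mathcal{L}_m(x,\bm{w}-\mu\bm{g})=\mathcal{L}_m(x,\bm{w})-\mu\,\bm{g}^\top\bm{g}_m+O(\mu^2)$ and likewise for $\mathcal{L}_p$ and $\mathcal{L}_{\mathcal{N}}$. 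Substituting the vanishing established above, in part~1 one gets $\mathcal{L}_{\mathcal{N}}(x,\bm{w}-\mu\bm{g})=-\mu\|\bm{g}\|_2^2+O(\mu^2)$, and in part~2 one gets $\mathcal{L}_p(x,\bm{w}-\mu\bm{g})=-\mu\,\bm{g}^\top\bm{g}_p+O(\mu^2)$. Feeding these back through the identities of the first paragraph, the candidate $\lambda_l$‑update equals
\[
\lambda_l-\mu\,\frac{\partial\mathcal{L}_{\mathcal{N}}(x,\bm{w}-\mu\bm{g})}{\partial\lambda_l}
=\lambda_l+\frac{\mu^2 q}{\lambda_l}+O(\mu^3),
\]
where $q=\|\bm{g}\|_2^2$ in part~1 and $q=\bm{g}^\top\bm{g}_p$ in part~2.

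It then remains to show $q>0$: if so, the increment $\mu^2 q/\lambda_l$ has the same sign as $\lambda_l$ and thus pushes it away from the origin, giving $\big|\lambda_l+\mu^2 q/\lambda_l+O(\mu^3)\big|=|\lambda_l|+\mu^2 q/|\lambda_l|+O(\mu^3)>|\lambda_l|$ for all sufficiently small $\mu>0$ — and the argument is identical whether $\lambda_l>0$ or $\lambda_l<0$. In part~1, $q=\|\bm{g}\|_2^2>0$ provided the current point is not already a critical point of $\mathcal{L}_{\mathcal{N}}$. In part~2, $q=\bm{g}^\top\bm{g}_p=\|\bm{g}_p\|_2^2+\bm{g}_m^\top\bm{g}_p$, and Cauchy–Schwarz yields $\bm{g}_m^\top\bm{g}_p\ge-\|\bm{g}_m\|_2\|\bm{g}_p\|_2$, whence $q\ge\|\bm{g}_p\|_2\big(\|\bm{g}_p\|_2-\|\bm{g}_m\|_2\big)>0$ precisely because of the hypothesis $\|\bm{g}_p\|_2>\|\bm{g}_m\|_2$. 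I expect this cross term $\bm{g}_m^\top\bm{g}_p$, whose sign is a priori unknown, to be the main obstacle — it is the sole reason the extra norm hypothesis is imposed in part~2 — while everything else (the product‑rule identities, the Taylor remainder being absorbed by the small learning rate) is routine.
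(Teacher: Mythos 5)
Your proposal is correct and follows essentially the same route as the paper: the product-rule identity $\partial\mathcal{L}/\partial\lambda_l=\mathcal{L}_{\mathcal{N}}/\lambda_l$ (resp.\ $\mathcal{L}_p/\lambda_l$), a first-order Taylor expansion along the step $\bm{w}-\mu\bm{g}$ yielding the increment $\mu^2 q/\lambda_l$ with $q=\|\bm{g}\|_2^2$ or $\bm{g}^\top\bm{g}_p$, and Cauchy--Schwarz with the hypothesis $\|\bm{g}_p\|_2>\|\bm{g}_m\|_2$ to get $q>0$ in part~2. You are in fact slightly more careful than the paper in flagging that part~1 needs $\bm{g}\neq 0$ for the strict inequality.
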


Thm.~\ref{the:bndf} suggests that $|\lambda_l|$ will increase for layers $l$ that do not have skip-connections. Conversely, if layer $l$ has a parallel skip connection, then $|\lambda_l|$ will increase when the gradient from deeper paths $\bm{g}_p$ becomes dominant as shallow paths reach a local minima. Notice that an increase in $|\lambda_l|,l \not \in l_1...l_m$ results in an increase in $|\tilde{\lambda}_p|$, while $|\tilde{\lambda}_m|$ remains unchanged, therefore shifting the balance into deeper ensembles.

This steady increase of $|\lambda_l|$, as predicted in our theoretical analysis, is also backed in experimental results, as depicted in Fig.~\ref{fig}(d). Note that the first layer, which cannot be skipped, behaves differently than the other layers.

It is worth noting that the mechanism for this dynamic property of residual networks can also be observed without the use of batch normalization, as a steady increase in the $L2$ norm of the weights, as shown in Fig.~\ref{fig}(e). In order to model this, consider the residual network as discussed above, without batch normalization layers.
Recalling, $\|\bm{w}\|_2 = C\sqrt{\Lambda}, \tilde{\bm{w}}=\frac{\bm{w}}{C}$, the loss of this network is expressed as:
\begin{multline}\label{eq:nbn}
\mathcal{L}_{\mathcal{N}}(x,\bm{w})  
=\frac{\epsilon_m}{\Lambda^{\frac{m-1}{2}}}\sum_{i_1,i_2...i_m=1}^{\Lambda}\tilde{x}_{i_1,i_2...i_m}\prod_{k=1}^{m}\tilde{w}_{i_k} + \frac{\epsilon_p}{\Lambda^{\frac{p-1}{2}}}\sum_{i_1,i_2...i_p=1}^{\Lambda}\tilde{x}_{i_1,i_2...i_p}\prod_{k=1}^{p}\tilde{w}_{i_k}\\
 = \mathcal{L}_m(x,\bm{w})+\mathcal{L}_p(x,\bm{w})
 \end{multline}
\begin{theorem}\label{the:df}
Considering the loss in \ref{eq:nbn}, and assuming  $\frac{\partial \mathcal{L}_{\mathcal{N}}(x,\bm{w})}{\partial C} = 0$, then for a small learning rate $0<\mu<<1$ the following hold:
\begin{equation}
\frac{\partial \mathcal{L}_{\mathcal{N}}(x,\bm{w} - \mu \bm{g})}{\partial C} \approx - \mu\frac{1}{C}(m\|\bm{g}_m\|^2_2 + p\|\bm{g}_p\|^2_2 + (m+p)\bm{g}_p^\top\bm{g}_m)
\end{equation}
\end{theorem}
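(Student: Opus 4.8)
The plan is to regard $\partial\mathcal{L}_{\mathcal{N}}/\partial C$ as a scalar field on weight space, obtain a closed form for it, and then Taylor-expand that field to first order along the gradient step $\bm{w}\mapsto\bm{w}-\mu\bm{g}$. The stationarity hypothesis $\partial\mathcal{L}_{\mathcal{N}}(x,\bm{w})/\partial C=0$ will be used twice: once to discard the zeroth-order term of the expansion, and once to annihilate the term produced by the $C$-dependence of an overall $1/C$ prefactor.

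First I would compute the radial derivative. Differentiating $\mathcal{L}_{\mathcal{N}}=\mathcal{L}_m+\mathcal{L}_p$ from Eq.~\ref{eq:nbn} with the direction $\bm{\tilde w}$ held fixed, the only $C$-dependence of $\mathcal{L}_r$ is carried by $\epsilon_r$; invoking the identity $\partial\epsilon_r/\partial C=r\epsilon_r/C$ already used in Eq.~\ref{eq:r} gives $\partial\mathcal{L}_r/\partial C=(r/C)\mathcal{L}_r$, hence
\[
\Phi(\bm{w})\;:=\;\frac{\partial\mathcal{L}_{\mathcal{N}}(x,\bm{w})}{\partial C}\;=\;\frac{1}{C}\bigl(m\,\mathcal{L}_m+p\,\mathcal{L}_p\bigr).
\]
Next I would differentiate $\Phi$ with respect to $\bm{w}$. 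The product rule yields $\nabla_{\bm{w}}\Phi=\bigl(m\mathcal{L}_m+p\mathcal{L}_p\bigr)\nabla_{\bm{w}}(1/C)+\tfrac{1}{C}\bigl(m\bm{g}_m+p\bm{g}_p\bigr)$, with $\bm{g}_m=\nabla_{\bm{w}}\mathcal{L}_m$ and $\bm{g}_p=\nabla_{\bm{w}}\mathcal{L}_p$ (so that $\bm{g}=\bm{g}_m+\bm{g}_p$). The hypothesis $\Phi(\bm{w})=0$ is precisely $m\mathcal{L}_m+p\mathcal{L}_p=0$ at $\bm{w}$, which kills the first, inconvenient term, leaving $\nabla_{\bm{w}}\Phi(\bm{w})=\tfrac{1}{C}(m\bm{g}_m+p\bm{g}_p)$.

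Finally I would Taylor-expand $\Phi$ about $\bm{w}$ and evaluate at $\bm{w}-\mu\bm{g}$: the constant term is $\Phi(\bm{w})=0$, so
\[
\frac{\partial\mathcal{L}_{\mathcal{N}}(x,\bm{w}-\mu\bm{g})}{\partial C}=\Phi(\bm{w}-\mu\bm{g})\approx-\mu\,\bm{g}^{\top}\nabla_{\bm{w}}\Phi(\bm{w})=-\frac{\mu}{C}\,(\bm{g}_m+\bm{g}_p)^{\top}(m\bm{g}_m+p\bm{g}_p),
\]
and expanding the bilinear form — using $\bm{g}_m^{\top}\bm{g}_p=\bm{g}_p^{\top}\bm{g}_m$ — produces exactly $m\|\bm{g}_m\|_2^2+p\|\bm{g}_p\|_2^2+(m+p)\bm{g}_p^{\top}\bm{g}_m$, as claimed; the $\approx$ absorbs the $O(\mu^2)$ Taylor remainder, which is harmless for $0<\mu\ll1$ given the smoothness of $\mathcal{L}_{\mathcal{N}}$ in $\bm{w}$.

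The only genuinely delicate point is the cancellation in the second step: a naive gradient of $\Phi$ retains the term $(m\mathcal{L}_m+p\mathcal{L}_p)\,\nabla_{\bm{w}}(1/C)$, which does not assemble into the target expression, and it is the recognition that this term equals $C\,\Phi(\bm{w})\,\nabla_{\bm{w}}(1/C)$ — hence vanishes exactly at a stationary scale — that makes the identity clean. Everything else is bookkeeping, subject to the modelling conventions already in force in the paper (treating the normalizer $z$ in $\epsilon_r$ as $C$-independent, so that $\partial\epsilon_r/\partial C=r\epsilon_r/C$, and neglecting second-order terms in the step size).
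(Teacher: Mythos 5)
Your proof is correct and follows essentially the same route as the paper's: write the radial derivative explicitly as $\frac{1}{C}(m\mathcal{L}_m+p\mathcal{L}_p)$, Taylor-expand it to first order along the step $-\mu\bm{g}$, and use the stationarity hypothesis to drop both the zeroth-order term and the contribution from differentiating the $1/C$ prefactor. The only cosmetic difference is how that last contribution is killed — you invoke $m\mathcal{L}_m+p\mathcal{L}_p=0$ directly, while the paper uses the equivalent orthogonality $\bm{w}^\top\bm{g}=0$ derived from the same hypothesis; incidentally, your final expansion correctly yields $m\|\bm{g}_m\|_2^2$, whereas the paper's last display contains a typo ($m\|\bm{g}_p\|_2^2$) that contradicts its own theorem statement.
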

Thm.~\ref{the:df} indicates that when deeper gradients become dominant (for example, near local minimas of the shallow network), the scaling of the weights $C$ will increase. This expansion will, in turn, emphasize the contribution of deeper paths, and increase the overall capacity of the residual network.

\section{The loss surface of Ensembles}
We now present the results of \cite{spin2} regarding the asymptotic complexity in the case of $\lim_{\Lambda\to\infty}$ of the multi-spherical spin glass model given by:
\begin{equation}
\mathcal{H}_{\bm{\epsilon},\Lambda} = -\sum_{r=2}^{\infty}\frac{\epsilon_r}{\Lambda^{\frac{r-1}{2}}}\sum_{i_1,...i_r=1}^{\Lambda}J^r_{i_1...i_r}\tilde{w}_{i_2}...\tilde{w}_{i_r}
\end{equation}
where $J^r_{i_1...i_r}$ are independent centered standard Gaussian variables, and $\bm{\epsilon}=(\epsilon_r)_{r>2}$ are positive real numbers such that $\sum_{r=2}^{\infty}\epsilon_r2^r<\infty$.
A configuration $\bm{w}$ of the spin spherical spin-glass model is a vector in $R^\Lambda$ satisfying
the spherical constraint:
\begin{equation}
\frac{1}{\Lambda}\sum_{i=1}^{\Lambda}w_i^2 = 1, ~~~~\sum_{r=2}^{\infty}\epsilon_r^2 = 1
\end{equation}
Note that the variance of the process is independent of $\bm{\epsilon}$:
\begin{equation}
E[\mathcal{H}_{\bm{\epsilon},\Lambda}^2]= \sum_{r=2}^{\infty}\Lambda^{1-r}\epsilon_r^2(\sum_{i=1}^{\Lambda}w_i^2)^r = \Lambda \sum_{r=1}^{\infty}\epsilon_r^2 = \Lambda
\end{equation}
\begin{definition}
We define the following:
\begin{equation}\label{eq:vv}
v'=\sum_{r=2}^{\infty}\epsilon_r^2r,~~~~~~v''=\sum_{r=2}^{\infty}\epsilon_r^2r(r-1),~~~~~~\alpha^2 = v''+v'-v'^2
\end{equation}
\end{definition}
Note that for the single interaction spherical spin model $\alpha^2=0$. The index of a critical point of $H_{\bm{\epsilon},\Lambda}$ is defined as the number of negative eigenvalues in the hessian $\nabla^2 H_{\bm{\epsilon},\Lambda}$ evaluated at the critical point $\bm{w}$.

\begin{definition}\label{eq:v}
For any $0 \leq k < \Lambda$ and $u \in \mathcal{R}$, we denote the random number $Crt_{\lambda,k}(u,\bm{\epsilon})$ as the number of critical points of the hamiltonian in the set $BX= \{\Lambda X |X \in (-\infty,u)\}$ with index $k$. 
That is:
\begin{equation}
Crt_{\Lambda,k}(u,\bm{\epsilon}) = \sum_{\bm{w}:\nabla H_{\bm{\epsilon},\Lambda} = 0} \mathbbm{1}\left\{ H_{{\bm{\epsilon},\Lambda}}\in \Lambda u \right\}\mathbbm{1}\left\{ i(\nabla^2 H_{\bm{\epsilon},\Lambda})=k\right\}   
\end{equation}
\end{definition}

Furthermore, define $\theta_{k}(u,\bm{\epsilon}) = \lim_{\Lambda\to\infty}\frac{1}{\Lambda}log\E[Crt_{\Lambda,k}(u\bm{\epsilon})]$. 
Corollary 1.1 of~\cite{spin2} states that for any $k>0$:
\begin{equation}\label{eq:c}
\theta_{k}(\mathbb{R},\bm{\epsilon})=\frac{1}{2}log(\frac{v''}{v'})-\frac{v''-v'}{v''+v'}
\end{equation}

Eq.~\ref{eq:c} provides the asymptotic mean total number of critical points with non-diverging index $k$. It is presumed that the SGD algorithm will easily avoid critical points with a high index that have many descent directions, and maneuver towards low index critical points. We, therefore, investigate how the mean total number of low index critical points vary as the ensemble distribution embodied in $(\epsilon_r)_{r>2}$ changes its shape by a steady increase in $\beta$. 

Fig.~\ref{fig}(f) shows that as the ensemble progresses towards deeper networks, the mean amount of low index critical points increases, which might cause the SGD optimizer to get stuck in local minima. This is, however, resolved by the the fact that by the time the ensemble becomes deep enough, the loss function has already reached a point of low energy as shallower ensembles were more dominant earlier in the training.  
In the following theorem, we assume a finite ensemble such that $\sum_{r=p+1}^{\infty}\epsilon_r2^r \approx 0$.
\begin{theorem}\label{the:main}
For any $k\in \mathbb{N},p>1$, we denote the solution to the following constrained optimization problems:
\begin{equation}
\bm{\epsilon}^* = \argmax_{\bm{\epsilon}}\theta_{k}(\mathbb{R},\bm{\epsilon})~~~s.t~~~\sum_{r=2}^{p}\epsilon_r^2 = 1 
\end{equation}
It holds that:
\begin{equation}
\epsilon_r^* = 
  \begin{cases}
    1, &  r=p \\
    0, & \text{otherwise }
  \end{cases}
\end{equation}
\end{theorem}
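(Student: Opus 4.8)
The plan is to analyze the function $\theta_k(\mathbb{R},\bm{\epsilon})$ from Eq.~\ref{eq:c}, which depends on $\bm{\epsilon}$ only through the two scalars $v' = \sum_{r=2}^p \epsilon_r^2 r$ and $v'' = \sum_{r=2}^p \epsilon_r^2 r(r-1)$, and to show it is maximized by pushing all the mass to $r=p$. First I would fix $v'$ at some admissible value and observe that $\theta_k$ is increasing in $v''$: writing $f(v'',v') = \frac12\log(v''/v') - \frac{v''-v'}{v''+v'}$, one computes $\partial f/\partial v'' = \frac{1}{2v''} - \frac{2v'}{(v''+v')^2} = \frac{(v''-v')^2}{2v''(v''+v')^2} \ge 0$, so for a given $v'$ we want $v''$ as large as possible. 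Since $r(r-1) = r^2 - r$ and $\sum \epsilon_r^2 = 1$, maximizing $v''$ subject to fixed $v'$ is the same as maximizing $\sum \epsilon_r^2 r^2$ subject to $\sum \epsilon_r^2 = 1$ and $\sum \epsilon_r^2 r = v'$; this is a linear program in the variables $q_r := \epsilon_r^2$ over the simplex with one extra linear constraint.

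Next I would combine this with the dependence on $v'$ itself. The cleaner route is to treat $(q_r)_{r=2}^p$ directly as a probability distribution on $\{2,\dots,p\}$ and regard $v' = \E[R]$, $v'' = \E[R^2] - \E[R] = \E[R(R-1)]$ where $R$ is the random interaction order. The key inequality is that $\theta_k$, as a function of the distribution of $R$, is Schur-convex / is maximized at the extreme point $R \equiv p$. I would argue this by a two-point shift argument: suppose $\bm{\epsilon}^*$ is optimal and has positive mass on some $r_0 < p$. Consider moving an infinitesimal amount of squared-mass from order $r_0$ to order $p$ (keeping $\sum \epsilon_r^2 = 1$); this strictly increases both $v'$ and $v''$, and in fact increases $v''/v'$ and the whole expression $f(v'',v')$. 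To see the monotonicity along this one-parameter family, I would write $v'(t), v''(t)$ as affine functions of the transported mass $t$ and show $\frac{d}{dt} f(v''(t),v'(t)) > 0$ using the chain rule together with $\partial f/\partial v'' \ge 0$ and the sign of $\partial f/\partial v'$; the crucial point is that the move increases $v''$ by strictly more, in the relevant ratio, than it increases $v'$, because $p(p-1)/p = p-1 > r_0 - 1 = r_0(r_0-1)/r_0$.

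Concretely, I would reduce everything to the single scalar $t := v''/v'$. Note that if $R \equiv p$ deterministically then $v' = p$, $v'' = p(p-1)$, so $t = p-1$, and more generally $v''/v' = \E[R(R-1)]/\E[R] \le (p-1)$ always, with equality iff $R \equiv p$ (this follows since $R(R-1) \le (p-1)R$ pointwise on $\{2,\dots,p\}$, with equality only at $R=p$, so $\E[R(R-1)] \le (p-1)\E[R]$). Then I would show $f(v'',v')$ is increasing in $v''/v'$ along the optimal frontier: rewriting $f = \frac12\log t - \frac{t-1}{t+1}$ with $t = v''/v'$ — wait, this substitution only works if $f$ genuinely depends on $v',v''$ through $t$ alone, which it does not in general since $\frac{v''-v'}{v''+v'} = \frac{t-1}{t+1}$ but $\frac12\log(v''/v') = \frac12\log t$ does hold. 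So in fact $f$ \emph{does} depend only on $t = v''/v'$! Then $f(t) = \frac12\log t - \frac{t-1}{t+1}$, and $f'(t) = \frac{1}{2t} - \frac{2}{(t+1)^2} = \frac{(t-1)^2}{2t(t+1)^2} \ge 0$, so $f$ is nondecreasing in $t$, hence maximized when $t$ is maximal, i.e. $t = p-1$, i.e. $R \equiv p$, i.e. $\epsilon_p^* = 1$ and all other $\epsilon_r^* = 0$.

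The main obstacle, then, is not an optimization difficulty at all once the reduction to $t = v''/v'$ is spotted — it is making sure the reduction and the boundary reasoning are airtight: checking that $\theta_k$ really is a function of $t$ alone (careful reading of Eq.~\ref{eq:c}), verifying that the admissible set of $t$ values realizable by distributions on $\{2,\dots,p\}$ is exactly an interval with right endpoint $p-1$ attained uniquely at the point mass at $p$, confirming $f'(t)\ge 0$ on that interval with the relevant strictness so the maximizer is unique, and handling the degenerate possibility $t=1$ (which cannot occur here since $R \ge 2$ forces $v'' \ge v'$ actually $v'' = \E[R(R-1)] \ge \E[R] = v'$ with equality impossible for $R\ge 2$, so $t>1$ strictly and $f$ is strictly increasing there). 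I would also note the constraint $k \in \mathbb{N}$, $k>0$ only matters insofar as Eq.~\ref{eq:c} is the applicable formula; the answer is independent of $k$.
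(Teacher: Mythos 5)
Your proof is correct, and it takes a genuinely different---and in fact tighter---route than the paper's. The paper writes $\theta_k$ as a difference of two generalized Rayleigh quotients in the diagonal matrices $V'$, $V''$ and invokes $\max_{\bm{\epsilon}}(A-B)\le \max_{\bm{\epsilon}}A-\min_{\bm{\epsilon}}B$, evaluating each extremum separately over the sphere. The trouble with that decoupling is that the two extrema are not attained at the same point: $\max_i V''_{ii}/V'_{ii}=p-1$ is attained at $i=p$, whereas $\min_i (V''_{ii}-V'_{ii})/(V''_{ii}+V'_{ii})=\min_i(1-2/i)=0$ is attained at $i=2$, not the value $1-2/p$ the paper records; so the decoupled bound as written only gives $\max\theta_k\le\frac12\log(p-1)$, which strictly exceeds $\theta_k(\mathbb{R},\bm{\epsilon}^*)$ for $p>2$ and does not close the argument. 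Your key observation---that $\theta_k$ depends on $\bm{\epsilon}$ only through the single scalar $t=v''/v'$, that $f(t)=\frac12\log t-\frac{t-1}{t+1}$ satisfies $f'(t)=\frac{(t-1)^2}{2t(t+1)^2}\ge 0$, and that $t=\E[R(R-1)]/\E[R]\le p-1$ with equality iff all squared mass sits at $r=p$ (from the pointwise bound $r(r-1)\le(p-1)r$ on $\{2,\dots,p\}$)---sidesteps the decoupling entirely and yields the claim cleanly, including uniqueness of the maximizer for $p>2$ since $f'>0$ on $t>1$. Two cosmetic remarks: you do not need the set of achievable $t$ to be an interval, only the upper bound $t\le p-1$ together with its equality case; and your preliminary computation of $\partial f/\partial v''$ at fixed $v'$ is subsumed by the one-variable reduction and could be dropped.
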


Theorem \ref{the:main} implies that any heterogeneous mixture of spin glasses contains fewer critical points of a finite index, than a mixture in which only $p$ interactions are considered. Therefore, for any distribution of $\bm{\epsilon}$ that is attainable during the training of a ResNet of depth $p$, the number of critical points is lower than the number of critical points for a conventional network of depth $p$.

\section{Conclusion}

Ensembles are a powerful model for ResNets, which unravels some of the key questions that have surrounded ResNets since their introduction.  Here, we show that ResNets display a dynamic ensemble behavior, which explains the ease of training such networks even at very large depths, while still maintaining the advantage of depth. As far as we know, the dynamic behavior of the effective capacity is unlike anything documented in the deep learning literature.  Surprisingly, the dynamic mechanism typically takes place within the outer multiplicative factor of the batch normalization module.

\bibliography{losssurface}

\begin{thebibliography}{10}
\providecommand{\natexlab}[1]{#1}
\providecommand{\url}[1]{\texttt{#1}}
\expandafter\ifx\csname urlstyle\endcsname\relax
  \providecommand{\doi}[1]{doi: #1}\else
  \providecommand{\doi}{doi: \begingroup \urlstyle{rm}\Url}\fi

\bibitem[Auffinger \& Arous(2013)Auffinger and Arous]{spin2}
Antonio Auffinger and {Gerard Ben} Arous.
\newblock Complexity of random smooth functions on the high-dimensional sphere.
\newblock \emph{Annals of Probability}, 41\penalty0 (6):\penalty0 4214--4247,
  11 2013.

\bibitem[Auffinger et~al.(2013)Auffinger, Arous, and Černý]{spin}
Antonio Auffinger, {Gérard Ben} Arous, and Jiří Černý.
\newblock Random matrices and complexity of spin glasses.
\newblock \emph{Communications on Pure and Applied Mathematics}, 66\penalty0
  (2):\penalty0 165--201, 2 2013.
\newblock \doi{10.1002/cpa.21422}.

\bibitem[Choromanska et~al.(2015)Choromanska, Henaff, Mathieu, Arous, and
  LeCun]{yann}
Anna Choromanska, Mikael Henaff, Micha{\"{e}}l Mathieu, G{\'{e}}rard~Ben Arous,
  and Yann LeCun.
\newblock The loss surfaces of multilayer networks.
\newblock In \emph{AISTATS}, 2015.

\bibitem[Glorot \& Bengio(2010)Glorot and Bengio]{fan-in2}
Xavier Glorot and Yoshua Bengio.
\newblock Understanding the difficulty of training deep feedforward neural
  networks.
\newblock In \emph{AISTATS}, 2010.

\bibitem[He et~al.(2015)He, Zhang, Ren, and Sun]{He2015}
Kaiming He, Xiangyu Zhang, Shaoqing Ren, and Jian Sun.
\newblock Deep residual learning for image recognition.
\newblock \emph{arXiv preprint arXiv:1512.03385}, 2015.

\bibitem[He et~al.(2016)He, Zhang, Ren, and Sun]{he2016identity}
Kaiming He, Xiangyu Zhang, Shaoqing Ren, and Jian Sun.
\newblock Identity mappings in deep residual networks.
\newblock \emph{arXiv preprint arXiv:1603.05027}, 2016.

\bibitem[Ioffe \& Szegedy(2015)Ioffe and Szegedy]{ioffe2015batch}
Sergey Ioffe and Christian Szegedy.
\newblock Batch normalization: Accelerating deep network training by reducing
  internal covariate shift.
\newblock In \emph{ICML}, pp.\  448--456, 2015.

\bibitem[Orr \& M{\"u}ller(2003)Orr and M{\"u}ller]{fan-in1}
Genevieve~B Orr and Klaus-Robert M{\"u}ller.
\newblock \emph{Neural networks: tricks of the trade}.
\newblock Springer, 2003.

\bibitem[Srivastava et~al.(2015)Srivastava, Greff, and
  Schmidhuber]{srivastava2015highway}
Rupesh~Kumar Srivastava, Klaus Greff, and J{\"u}rgen Schmidhuber.
\newblock Highway networks.
\newblock \emph{arXiv preprint arXiv:1505.00387}, 2015.

\bibitem[Veit et~al.(2016)Veit, Wilber, and Belongie]{Veit2016}
Andreas Veit, Michael Wilber, and Serge Belongie.
\newblock Residual networks behave like ensembles of relatively shallow
  networks.
\newblock In \emph{NIPS}, 2016.

\end{thebibliography}
\bibliographystyle{iclr2017_conference}

\appendix

\section{Summary of notations}
\label{sec:notations}
Table~\ref{tab:notations} presents the various symbols used throughout this work and their meaning.

\begin{table}[h!]
\caption{Notations}
\label{tab:notations}
\begin{center}
\begin{tabular}{ll}
\multicolumn{1}{c}{\bf SYMBOL}  &\multicolumn{1}{c}{\bf DESCRIPTION}
\\ \hline \\
         x&Input vector $\in \mathbb{R}^d$, sampled from a normal distribution  \\
         d& The dimensionality of the input $x$  \\
         $\mathcal{N}_i(x)$& The output of layer $i$ of network $\mathcal{N}$ given input $x$  \\
         $Y$& The final output of the network $\mathcal{N}$  \\
$Y_x$& True label of input $x$  \\
$\mathcal{L}_{\mathcal{N}}$& Loss function of network $\mathcal{N}$   \\
$\mathcal{L}_{\mathcal{N}}^h$& Hinge loss  \\
$\mathcal{L}_{\mathcal{N}}^a$& Absolute loss  \\
$p$& The depth of network $\mathcal{N}$  \\
         $\bm{w}$& Weights of the network $\bm{w}\in \mathbb{R}^{\Lambda}$  \\
C& A positive scale factor such that $\|\bm{w}\|_2 = \sqrt{\Lambda}C$  \\
         $\tilde{\bm{w}}$& Scaled weights such that  $\tilde{\bm{w}} = \frac{1}{C}\bm{w}$  \\
         n& The number of units in layers $l>0$  \\
         $\Lambda$& The number of unique weights in the network  \\
         $N$& The total number of weights in the network $\mathcal{N}$   \\
$W_l$& The weight matrix connecting layer $l-1$ to layer $l$ in $\mathcal{N}$.   \\
$\mathcal{H}_{p,\lambda}$& The hamiltonian of the $p$ interaction spherical spin glass model.\\
$\mathcal{H}_{\bm{\epsilon},\Lambda}$& The hamiltonian of the general spherical spin glass model.\\
$\gamma$& Total number of paths from input to output in network $\mathcal{N}$   \\  
$\psi$& $\gamma d$   \\
$\gamma_r$& Total number of paths from input to output in network $\mathcal{N}$ of length $r$   \\  
$\psi_r$& $\gamma_rd$   \\  
         
         $\mathcal{R}(\cdot)$& ReLU activation function  \\
$A_{ij}$& Bernoulli random variable associated with the ReLU activation function, indexed by $ij$.   \\
         $\rho$& Parameter of the Bernoulli distribution associated with the ReLU unit  \\
$\epsilon_r(\beta)$& multiplier associated with paths of length $r$ in $\mathcal{N}$.  
\\
$\beta$& $\frac{\rho n C}{\sqrt{\Lambda}}$. \\   
$z$& Normalization factor. \\
$\lambda_l$& Batch normalization multiplicative factor in layer $l$. \\
$\sigma_l$& The mean of the estimated standard deviation various elements in $\mathcal{R}(W_l^\top\mathcal{N}_{l-1}(x))$. \\
\end{tabular}
\end{center}
\end{table}

\section{Proofs}
\label{sec:proofs}

\begin{proof}[Proof of Lemma~\ref{lem:one}]
There are a total of $\psi_r$ paths of length $r$ from input to output, and a total of $\Lambda^r$ unique $r$ length configurations of weights. The uniformity assumption then implies that each configuration of weights is repeated $\frac{\psi_r}{\Lambda^r}$ times. By summing over the unique configurations, and re indexing the input we arrive at Eq.~\ref{eq:model}.  
\end{proof}

\begin{proof}[Proof of Lemma~\ref{the:var}]
From \ref{eq:xi}, we have that for each $\xi_{i_1,i_2...i_r}$ there exists a sequence $\bm{\beta} = (\beta_i)_{i=1}^p\in \mathbb{N}$ such that $\sum_{i=1}^{d}\beta_i = \frac{\psi_r}{\Lambda^r}$, and $\xi_{i_1,i_2...i_r} = \sum_{i=1}^{d}\beta_i x_i$.
We, therefore, have that $\E[\xi_{i_1,i_2...i_r}^2] = \|\bm{\beta}\|_2^2$. Note that the minimum value of $\E[\xi_{i_1,i_2...i_r}^2]$ is a solution to the following:
\begin{equation}
min(\E[\xi_{i_1,i_2...i_r}^2]) = min_{\bm{\beta}}(\|\bm{\beta}\|_2)~~~s.t~~~\|\bm{\beta}\|_1 = \frac{\psi_r}{\Lambda^r},~ (\beta_i)_{i=1}^p\in \mathbb{N},
\end{equation}
which achieves its minimal value at $\forall_i, \beta_i=\frac{1}{d}\frac{\psi_r}{\Lambda^r}$. Similarly, the maximum value is achieved at $\beta_i = \frac{\psi_r}{\Lambda^r}\delta_i$ for some index $i$.
\end{proof}

\begin{proof}[Proof of Thm.~\ref{the:max}]
We use the stirling approximation, which states $\lim_{p\to\infty} \frac{1}{p}log({\binom{p}{\alpha p}}) = H(\alpha)$, where $H(\alpha)=-\alpha log(\alpha) - (1-\alpha)log(1-\alpha)$.
Ignoring the constants which do not depend on $\alpha$,
\begin{equation}
\lim_{p\to\infty}\frac{1}{p}log({\binom{p}{\alpha p}}\beta^{\alpha p}) = H(\alpha) + \alpha log(\beta)
\end{equation}
which achieves its maximum value at $\alpha = \alpha^*$.
\end{proof}

\begin{proof}[Proof of Thm.~\ref{the:lim}]
For brevity, we provide a sketch of the proof.
It is enough to show that $\lim_{p\to\infty} \sum_{r=1}^{\alpha_1p}\epsilon_r^2=0$ for $\beta<1$. Ignoring the constants in the binomial terms, we have:
\begin{equation}\label{eq:limit}
\lim_{p\to\infty}\sum_{r=1}^{\alpha_1p}\epsilon_i^2 =
\lim_{p\to\infty}\frac{\sum_{i=1}^{\alpha_1p}{\binom{p}{r}}^2\beta^{2r}}{z^2} \leq \lim_{p\to\infty} \frac{\alpha_1p{\binom{p}{\alpha_1p}}^2\beta^{2\alpha_1p}}{z^2}
\end{equation}
Where $z^2 = {\sum_{r=1}^{p}\binom{p}{r}}^2\beta^{2 r}$, which can be expressed using the Legendre polynomial of order $p$:
\begin{equation}
z^2 = (1-\beta^2)^p\mathcal{P}_p(\frac{1+\beta^2}{1-\beta^2})
\end{equation}
In order to compute the limit of Eq.~\ref{eq:limit}, we use the asymptotic of the Legendre polynomial of order $p$ for $x>1$, $\mathcal{P}_p(x) \sim \frac{1}{\sqrt{2\pi p}}\frac{(x+\sqrt{x^2-1})^{p+\frac{1}{2}}}{(x^2-1)^{\frac{1}{4}}}$. For the term in the nominator of Eq.~\ref{eq:limit} , we use the Stirling approximation for factorials $p!\sim \sqrt{2\pi p}(\frac{p}{e})^p$. Substituting both approximations in Eq.~\ref{eq:limit} and taking the limit completes the proof.
\end{proof}

\begin{proof}[Proof of Lemma~\ref{lem:mix}]
For simplicity, we ignore the constants in the binomial coefficient, and assume $\epsilon_r = \frac{1}{z}{\binom{p}{r}}\beta^{r}$. Notice that for $\beta^* = {\binom{p}{\frac{p}{2}}}$, we have that $\argmax_r(\epsilon_r(\beta^*)) = p$, $\argmax_r(\epsilon_r(\frac{1}{\beta^*})) = 1$ and $\argmax_r(\epsilon_r(1)) = \frac{p}{2}$. From the monotonicity and continuity of $\beta^r$, any value $1\geq k \geq p$ can be attained. The linear dependency $\beta(C) = \frac{\rho n C}{\sqrt{\Lambda}}$ completes the proof. 
\end{proof}

\begin{proof}[Proof of Thm.~\ref{the:bndf}]
1. Notice that by definition, layer $l$ is not skipped by the residual connection, and therefore $\lambda_l$ multiplies every path in the network. Therefore,  $\frac{\partial \mathcal{L}_{\mathcal{N}}(x,\bm{w})}{\partial \lambda_l} = \frac{1}{\lambda_l}(\mathcal{L}_m(x,\bm{w})+\mathcal{L}_p(x,\bm{w}))$.
 Using taylor series expansion:
\begin{equation}\label{eq:taylor2}
\frac{\partial \mathcal{L}_{\mathcal{N}}(x,\bm{w} - \mu \bm{g})}{\partial \lambda_l} \approx \frac{\partial \mathcal{L}_{\mathcal{N}}(x,\bm{w})}{\partial \lambda_l} - \mu \nabla_{\bm{w}}\frac{\partial \mathcal{L}_{\mathcal{N}}(x,\bm{w})}{\partial \lambda_l}\bm{g}
\end{equation}
Substituting $\nabla_{\bm{w}}\frac{\partial \mathcal{L}_{\mathcal{N}}(x,\bm{w})}{\partial \lambda_l} = \frac{1}{\lambda_l}(\bm{g}_m+\bm{g}_p)$ in \ref{eq:taylor2} we have:
\begin{equation}
\frac{\partial \mathcal{L}_{\mathcal{N}}(x,\bm{w - \mu g_{\bm{w}}})}{\partial \lambda_l} \approx 0 - \mu\frac{1}{\lambda_l}(\bm{g}_m+\bm{g}_p)^\top(\bm{g}_m+\bm{g}_p)\\
=-\mu\frac{1}{\lambda_l}\|\bm{g}_m+\bm{g}_p\|_2^2<0
\end{equation}
And hence:
\begin{multline}
\lambda_l - \mu\frac{\partial \mathcal{L}_{\mathcal{N}}(x,\bm{w - \mu g_{\bm{w}}})}{\partial \lambda_l} = \lambda_l + \mu^2\frac{1}{\lambda_l}\|\bm{g}_m+\bm{g}_p\|_2^2 \\= \lambda_l(1 + \mu^2\frac{1}{\lambda_l^2}\|\bm{g}_m+\bm{g}_p\|_2^2)
\end{multline}
Finally:
\begin{equation}
|\lambda_l(1 + \mu^2\frac{1}{\lambda_l^2}\|\bm{g}_m+\bm{g}_p\|_2^2)| = |\lambda_l|(1 + \mu^2\frac{1}{\lambda_l^2}) \geq |\lambda_l|
\end{equation}
2. Since paths of length $m$ skip layer $l$, we have that $\nabla_{\bm{w}}\frac{\partial \mathcal{L}_{\mathcal{N}}(x,\bm{w})}{\partial \lambda_l} = \frac{1}{\lambda_l}\bm{g}_p$. Therefore:
\begin{equation}
\frac{\partial \mathcal{L}_{\mathcal{N}}(x,\bm{w} - \mu \bm{g})}{\partial \lambda_l} \approx 0 - \mu\frac{1}{\lambda_l}(\bm{g}_m+\bm{g}_p)^\top\bm{g}_p = -\mu\frac{1}{\lambda_l}(\bm{g}_m^\top\bm{g}_p + \|\bm{g}_p\|_2^2) 
\end{equation}
The condition $\|\bm{g}_p\|_2 >\|\bm{g}_m\|_2$ implies that $\bm{g}_m^\top\bm{g}_p + \|\bm{g}_p\|_2^2>0$, completing the proof. 
\end{proof}

\begin{proof}[Proof of Thm~\ref{the:df}]
Notice that $\frac{\partial \mathcal{L}_{\mathcal{N}}(x,\bm{w})}{\partial C} = \frac{\partial \mathcal{L}_{\mathcal{N}}(x,\bm{w})}{\partial \bm{w}}\frac{\partial\bm{w}}{\partial \|\bm{w}\|_2}\sqrt{\Lambda} = \bm{g}^\top\tilde{\bm{w}} = 0$, and hence the gradient is orthogonal to the weights.
Using \ref{eq:r}, we have that $\frac{\partial \mathcal{L}_{\mathcal{N}}(x,\bm{w})}{\partial C} = \frac{1}{C}(m\mathcal{L}_m(x,\bm{w})+p\mathcal{L}_p(x,\bm{w}))$.
Using taylor series expansion we have:
\begin{equation}\label{eq:taylor}
\frac{\partial \mathcal{L}_{\mathcal{N}}(x,\bm{w} - \mu \bm{g})}{\partial C} \approx \frac{\partial \mathcal{L}_{\mathcal{N}}(x,\bm{w})}{\partial C} - \mu \nabla_{\bm{w}}\frac{\partial \mathcal{L}_{\mathcal{N}}(x,\bm{w})}{\partial C}\bm{g}
\end{equation}
For the last term we have:
\begin{multline}
\nabla_{\bm{w}}\frac{\partial \mathcal{L}_{\mathcal{N}}(x,\bm{w})}{\partial C}\bm{g} = (m\mathcal{L}_m(x,\bm{w})+p\mathcal{L}_p(x,\bm{w}))\nabla_{\bm{w}}\frac{\sqrt{\Lambda}}{\|\bm{w}\|_2}\bm{g} + \frac{1}{C}(m\bm{g}_m+p\bm{g}_p)^\top\bm{g}\\
=(m\mathcal{L}_m(x,\bm{w})+p\mathcal{L}_p(x,\bm{w}))\frac{\bm{w}^\top\bm{g}}{C^{\frac{3}{2}}} + \frac{1}{C}(m\bm{g}_m+p\bm{g}_p)^\top\bm{g} = \frac{1}{C}(m\bm{g}_m+p\bm{g}_p)^\top\bm{g},
\end{multline}
where the last step stems from the fact that $\bm{w}^\top \bm{g} = 0$.
Substituting $\nabla_{\bm{w}}\frac{\partial \mathcal{L}_{\mathcal{N}}(x,\bm{w})}{\partial C} = \frac{1}{C}(m\bm{g}_m+p\bm{g}_p)$ in \ref{eq:taylor} we have:
\begin{multline}
\frac{\partial \mathcal{L}_{\mathcal{N}}(x,\bm{w - \mu g_{\bm{w}}})}{\partial C} \approx 0 - \mu\frac{1}{C}(m\bm{g}_m+p\bm{g}_p)^\top(\bm{g}_m+\bm{g}_p)\\
=  - \mu\frac{1}{C}(m\|\bm{g}_p\|^2_2 + p\|\bm{g}_p\|^2_2 + (m+p)\bm{g}_p^\top\bm{g}_m)
\end{multline}
\end{proof}

\begin{proof}[Proof of Thm~\ref{the:main}]
Inserting Eq.~\ref{eq:vv} into Eq.~\ref{eq:c} we have that:
\begin{equation}
\theta_{k}(\mathbb{R},\bm{\epsilon}) =   
\frac{1}{2}log(\frac{\sum_{r=2}^{p}\epsilon_r^2r(r-1)}{\sum_{r=2}^{p}\epsilon_r^2r})-\frac{\sum_{r=2}^{p}\epsilon_r^2r(r-2)}{\sum_{r=2}^{p}\epsilon_r^2r^2}
\end{equation}
We denote the matrices $V'$ and $V''$ such that $V'_{ij} = r\delta_{ij}$ and $V''_{ij} = r(r-1)\delta_{ij}$. We then have:
\begin{equation}
\theta_{k}(\mathbb{R},\bm{\epsilon}) =   
\frac{1}{2}log(\frac{\bm{\epsilon}^\top V''\bm{\epsilon}}{\bm{\epsilon}^\top V'\bm{\epsilon}})-\frac{\bm{\epsilon}^\top (V''-V')\bm{\epsilon}}{\bm{\epsilon}^\top (V''+V')\bm{\epsilon}}
\end{equation}
\begin{multline}
max_{\bm{\epsilon}}\theta_{k}(\mathbb{R},\bm{\epsilon}) \leq   
max_{\bm{\epsilon}}(\frac{1}{2}log(\frac{\bm{\epsilon}^\top V''\bm{\epsilon}}{\bm{\epsilon}^\top V'\bm{\epsilon}}))
-min_{\bm{\epsilon}}(\frac{\bm{\epsilon}^\top (V''-V')\bm{\epsilon}}{\bm{\epsilon}^\top (V''+V')\bm{\epsilon}})\\
 =\frac{1}{2}log\bigg(max_i(V_{ii}''V_{ii}'^{-1})\bigg) - min_i\bigg((V_{ii}''-V_{ii}')(V_{ii}''+V_{ii}')^{-1}\bigg)\\
 =\frac{1}{2}log(p-1) - (1-\frac{2}{p}) = \theta_{k}(\mathbb{R},\bm{\epsilon}^*)
\end{multline}
\end{proof}
\end{document}